\definecolor{citecolor}{HTML}{229954}
\newcommand{\red}[1]{\textcolor{red}{#1}}
\newcommand{\bfsection}[1]{\vspace*{0.1cm}\noindent\textbf{#1.}}
\crefname{section}{Sec.}{Secs.}
\Crefname{section}{Section}{Sections}
\Crefname{table}{Table}{Tables}
\crefname{table}{Tab.}{Tabs.}
\newtheorem{proposition}{Proposition}%[section]
\newtheorem{remark}{Remark}%[section]
\renewcommand{\maketag@@@}[1]{\hbox{\m@th\normalsize\normalfont#1}}%
\ificcvfinal\pagestyle{empty}\fi
\begin{document}

    %%%%%%%%% TITLE
    \title{DDFM: Denoising Diffusion Model for Multi-Modality Image Fusion}

    \author{Zixiang Zhao$^{1,2}$\quad
            Haowen Bai$^{1}$\quad
            Yuanzhi Zhu$^{2}$\quad
            Jiangshe Zhang$^{1}$\thanks{Corresponding author.}\quad
            Shuang Xu$^{3}$\\
            Yulun Zhang$^{2}$\quad
            Kai Zhang$^{2}$\quad
            Deyu Meng$^{1,5}$\quad
            Radu Timofte$^{2,4}$\quad
            Luc Van Gool$^{2}$\\[2mm]
		$^{1}$Xi’an Jiaotong University\quad
            $^{2}$Computer Vision Lab, ETH Z\"urich\\
            $^{3}$Northwestern Polytechnical University\quad
            $^{4}$University of W\"urzburg\\
            $^{5}$Macau University of Science and Technology\\
            {\tt\small zixiangzhao@stu.xjtu.edu.cn, jszhang@mail.xjtu.edu.cn}
}
\maketitle
% Remove page # from the first page of camera-ready.
\ificcvfinal\thispagestyle{empty}\fi

%%%%%%%%% ABSTRACT
\begin{abstract}
    Multi-modality image fusion aims to combine different modalities to produce fused images that retain the complementary features of each modality, such as functional highlights and texture details. To leverage strong generative priors and address challenges such as unstable training and lack of interpretability for GAN-based generative methods, we propose a novel fusion algorithm based on the denoising diffusion probabilistic model (DDPM). The fusion task is formulated as a conditional generation problem under the DDPM sampling framework, which is further divided into an unconditional generation subproblem and a maximum likelihood subproblem. The latter is modeled in a hierarchical Bayesian manner with latent variables and inferred by the expectation-maximization (EM) algorithm. By integrating the inference solution into the diffusion sampling iteration, our method can generate high-quality fused images with natural image generative priors and cross-modality information from source images. Note that all we required is an unconditional pre-trained generative model, and no fine-tuning is needed. Our extensive experiments indicate that our approach yields promising fusion results in infrared-visible image fusion and medical image fusion. The code is available at \url{https://github.com/Zhaozixiang1228/MMIF-DDFM}.

\end{abstract}

\begin{figure}[t]
    \centering
    \includegraphics[width=\linewidth]{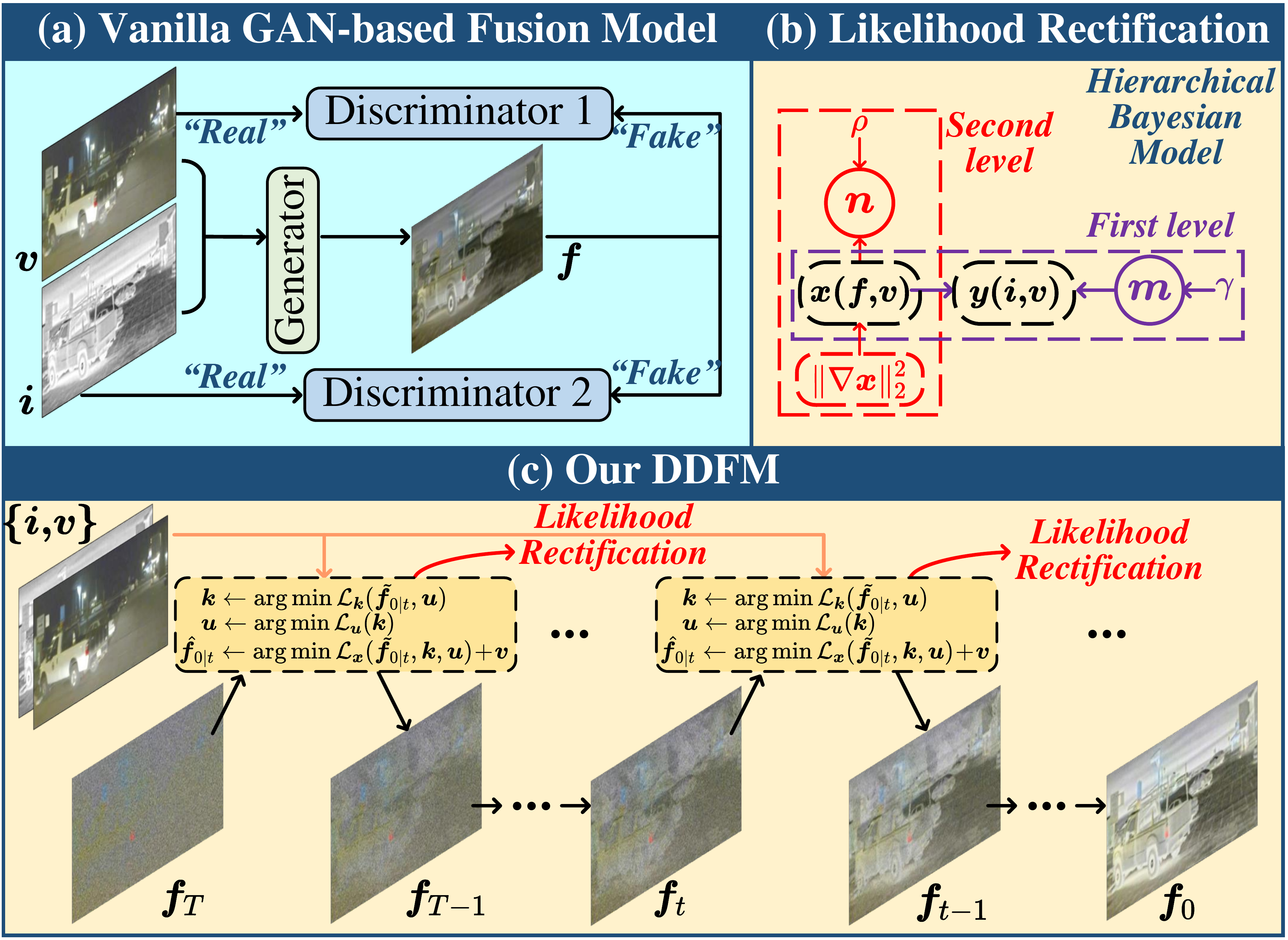}
    \caption{(a) Existing GAN-based fusion method workflow. (b) Graph of the hierarchical Bayesian model in likelihood rectification, linking the MMIF loss and our statistical inference model. (c) Our DDFM workflow: the unconditional diffusion sampling (UDS) module generates $\boldsymbol{f}_t$, while the likelihood rectification module, based on (b), rectifies UDS output with source image information.}
    \label{fig:introduction1}
\end{figure}
\begin{figure}[t]
    \vspace{-1em}
    \centering
    \includegraphics[width=\linewidth]{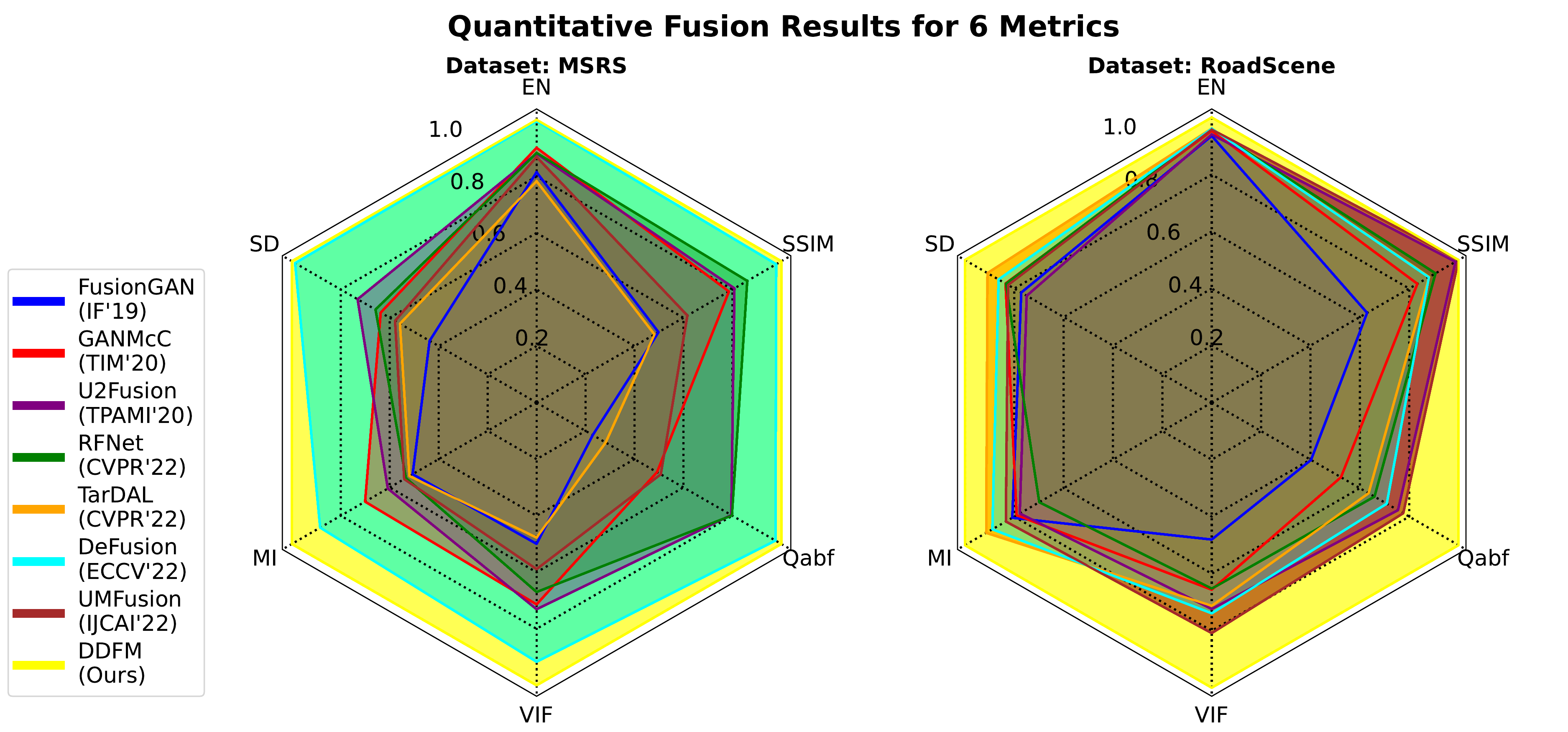}
    \caption{Visualization of results on MSRS~\cite{DBLP:journals/inffus/TangYZJM22} and RoadScene~\cite{xu2020aaai} in \cref{tab:Quantitative}. Hexagons formed by lines of different colors represent the values of different methods across six metrics. Our DDFM (marked in yellow) outperforms all other methods.}
    \label{fig:introduction}
    \vspace{-1em}
\end{figure}

\section{Introduction}\label{sec:1}
Image fusion integrates essential information from multiple source images to create high-quality fused images~\cite{meher2019a,zhao2023equivariant,liu2020bilevel,qin2022distribution}, encompassing various source image types like digital~\cite{DBLP:journals/tip/LiMYZ20,zhang2021deep,zhao2023spherical}, multi-modal~\cite{9151265,zhaoijcai2020}, and remote sensing~\cite{DBLP:conf/cvpr/Xu0ZSL021,DBLP:conf/icmcs/00010XSHL021}. This technology provides a clearer representation of objects and scenes, and has diverse applications such as saliency detection~\cite{DBLP:conf/cvpr/QinZHGDJ19,qin2023diverse,qin2023bibench}, object detection~\cite{he2023strategic,DBLP:journals/corr/abs-2004-10934,He2023Camouflaged,wang2021dual}, and semantic segmentation~\cite{DBLP:conf/mm/LiuLL021,he2023weaklysupervised,wang2022defensive}. Among the different subcategories of image fusion, \textit{Infrared-Visible image Fusion} (IVF) and \textit{Medical Image Fusion} (MIF) are particularly challenging in \textit{Multi-Modality Image Fusion} (MMIF) since they focus on modeling cross-modality features and preserving critical information from all sensors and modalities. Specifically, in IVF, fused images aim to retain both thermal radiation from infrared images and detailed texture information from visible images, thereby avoiding the limitations of visible images being sensitive to illumination conditions and infrared images being noisy and low-resolution. While MIF can assist in diagnosis and treatment by fusing multiple medical imaging modalities for precise detection of abnormality locations~\cite{DBLP:journals/inffus/JamesD14,he2023HQG}.

There have been numerous methods devised recently to address the challenges posed by MMIF~\cite{DBLP:conf/cvpr/LiuFHWLZL22,DBLP:journals/ijcv/ZhangM21,liu2023bi}, and generative models~\cite{DBLP:conf/nips/GoodfellowPMXWOCB14,mirza2014conditional} have been extensively utilized to model the distribution of fused images and achieve satisfactory fusion effects.
Among them, models based on Generative Adversarial Networks (GANs) \cite{ma2019fusiongan,DBLP:journals/tim/MaZSLX21,DBLP:journals/tip/MaXJMZ20,DBLP:conf/cvpr/LiuFHWLZL22} are dominant. The workflow of GAN-based models, illustrated in \cref{fig:introduction1}\red{a}, involves a generator that creates images containing information from source images, and a discriminator that determines whether the generated images are in a similar manifold to the source images.
Although GAN-based methods have the ability to generate high-quality fused images, they suffer from unstable training, lack of interpretability and mode collapse, which seriously affect the quality of the generated samples. Moreover, as a black-box model, it is difficult to comprehend the internal mechanisms and behaviors of GANs, making it challenging to achieve controllable generation.

Recently, \textit{Denoising Diffusion Probabilistic Models} (DDPM) \cite{DBLP:conf/nips/HoJA20} has garnered attention in the machine learning community, which generates high-quality images by modeling the diffusion process of restoring a noise-corrupted image towards a clean image. Based on the Langevin diffusion process, DDPM utilizes a series of reverse diffusion steps to generate promising synthetic samples~\cite{DBLP:conf/iclr/SongME21}.
Compared to GAN, DDPM does not require the discriminator network, thus mitigating common issues such as unstable training and mode collapse in GAN. Moreover, its generation process is interpretable, as it is based on denoising diffusion to generate images, enabling a better understanding of the image generation process~\cite{DBLP:conf/iclr/XiaoKV22}.

Therefore, we propose a \textbf{D}enoising \textbf{D}iffusion image \textbf{F}usion \textbf{M}odel (\textbf{DDFM}), as shown in \cref{fig:introduction1}\red{c}.
We formulate the conditional generation task as a DDPM-based posterior sampling model, which can be further decomposed into an unconditional generation diffusion problem and a maximum likelihood estimation problem. The former satisfies natural image prior while the latter is inferred to restrict the similarity with source images via {likelihood rectification}. Compared to discriminative approaches, modeling the natural image prior with DDPM enables better generation of details that are difficult to control by manually designed loss functions, resulting in visually perceptible images. As a generative method, DDFM achieves stable and controllable generation of fused images without discriminator, by applying likelihood rectification to the DDPM output.

Our contributions are organized in three aspects:
\begin{itemize}[itemsep=0em,topsep=0em,parsep=0pt]
    \item We introduce a DDPM-based posterior sampling model for MMIF, consisting of an unconditional generation module and a conditional likelihood rectification module. The sampling of fused images is achieved solely by a pre-trained DDPM without fine-tuning.
    \item In likelihood rectification, since obtaining the likelihood explicitly is not feasible, we formulate the optimization loss as a probability inference problem involving latent variables, which can be solved by the EM algorithm. Then the solution is integrated into the DDPM loop to complete conditional image generation.
    \item Extensive evaluation of IVF and MIF tasks shows that DDFM consistently delivers favorable fusion results, effectively preserving both the structure and detail information from the source images, while also satisfying visual fidelity requirements.
\end{itemize}

\section{Background}\label{sec:2}
\subsection{Score-based diffusion models}\label{sec:DDPM}
\bfsection{Score SDE formulation}
% \cite{DBLP:conf/icml/NicholD21,DBLP:conf/nips/HoJA20}
Diffusion models aim to generate samples by reversing a predefined forward process that converts a clean sample $\boldsymbol{x}_{0}$ to almost {Gaussian signal} $\boldsymbol{x}_{T}$ by gradually adding noise.
This forward process can be described by an It\^{o} Stochastic Differential Equation (SDE)~\cite{DBLP:conf/iclr/0011SKKEP21}:
\begin{equation}\label{eq:itoSDE}
    \mathrm{d}\boldsymbol{x} = -\frac{\beta(t)}{2} \boldsymbol{x}_{t}\mathrm{d}t + \sqrt{\beta(t)} \mathrm{d} \boldsymbol{w},
\end{equation}
where $\mathrm{d} \boldsymbol{w}$ is standard Wiener process and $\beta(t)$ is predefined noise schedule that favors the variance-preserving SDE~\cite{DBLP:conf/iclr/0011SKKEP21}.

This forward process can be reversed in time and still in the form of SDE \cite{anderson1982reverse}:
\begin{equation}\label{eq:revers_sde}
    \resizebox{.9\hsize}{!}{$
        \mathrm{d}\boldsymbol{x} = \left[-\frac{\beta(t)}{2}\boldsymbol{x}_{t} - \beta(t) \nabla_{\boldsymbol{x}_{t}} \log p_t(\boldsymbol{x_{t}})\right]\mathrm{d}t + \sqrt{\beta(t)} \mathrm{d} \overline{\boldsymbol{w}},$}
\end{equation}
where $\mathrm{d} \overline{\boldsymbol{w}}$ corresponds to the standard Wiener process running backward and the only unknown part $\nabla_{\boldsymbol{x}_{t}} \log p_t(\boldsymbol{x_{t}})$ can be modeled as the so-called \textit{score function} $\boldsymbol{s}_\theta(\boldsymbol{x}_{t},t)$ with denoising score matching methods, and this score function can be trained with the following objective \cite{hyvarinen2005estimation,song2019generative}:
\begin{equation}\label{eq:objective}
    \mathbb{E}_{t}\mathbb{E}_{\boldsymbol{x}_0}\mathbb{E}_{\boldsymbol{x}_t | \boldsymbol{x}_0 }
    \left[\|\boldsymbol{s}_\theta(\boldsymbol{x}_t, t) - \nabla_{\boldsymbol{x}_t}\log p_{0t}(\boldsymbol{x}_t | \boldsymbol{x}_0)\|_2^2 \right],
\end{equation}
where $t$ is uniformly sampled over $[0, T]$ and the data pair $(\boldsymbol{x}_0,\boldsymbol{x}_t) \sim p_0(\boldsymbol{x}) p_{0t}(\boldsymbol{x}_t |\boldsymbol{x}_0)$.

\bfsection{Sampling with diffusion models}
Specifically, an unconditional diffusion generation process starts with a random noise vector $\boldsymbol{x}_T \sim \mathcal{N}(\mathbf{0},\mathbf{I})$ and updates according to the discretization of \cref{eq:revers_sde}.
% as the input for the generation loop.
Alternatively, we can understand the sampling process in the DDIM fashion \cite{DBLP:conf/iclr/SongME21}, where the score function can also be considered to be a denoiser and predict the denoised $\tilde{\boldsymbol{x}}_{0|t}$ from any state $\boldsymbol{x}_t$ at iteration $t$:
\begin{equation}\label{eq:denoiser}
    \boldsymbol{\tilde{x}}_{0|t} = \frac{1}{\sqrt{\bar\alpha_t}}(\boldsymbol{x}_t + (1 - \bar\alpha_t)\boldsymbol{s}_\theta(\boldsymbol{x}_t,t)),
\end{equation}
and $\boldsymbol{\tilde{x}}_{0|t}$ denotes the estimation of $\boldsymbol{x}_0$ given $\boldsymbol{x}_t$. We use the same notation $\alpha_t=1-\beta_t$ and $\bar\alpha_t = \prod_{s=1}^t \alpha_s$ following Ho \etal \cite{DBLP:conf/nips/HoJA20}.
With this predicted $\boldsymbol{\tilde{x}}_{0|t}$ and the current state $\boldsymbol{x}_{t}$, $\boldsymbol{x}_{t-1}$ is updated from
\begin{equation}\label{eq:sampling_update}
    \boldsymbol{x}_{t-1} = \frac{\sqrt{\alpha_t}\left(1-\bar{\alpha}_{t-1}\right)}{1-\bar{\alpha}_t} \boldsymbol{x}_t+\frac{\sqrt{\bar{\alpha}_{t-1}} \beta_t}{1-\bar{\alpha}_t} \tilde{\boldsymbol{x}}_{0|t}+\tilde{\sigma}_t \boldsymbol{z},
\end{equation}
where $\boldsymbol{z}\sim \mathcal{N}(\mathbf{0},\mathbf{I})$ and $\tilde{\sigma}_t^2$ is the variance which is usually set to 0.
This sampled $\boldsymbol{x}_{t-1}$ is then fed into the next sampling iteration until the final image $\boldsymbol{x}_{0}$ is generated.
Further details about this sampling process can be found in the supplementary material or the original paper \cite{DBLP:conf/iclr/SongME21}.

\bfsection{Diffusion models applications}
Recently, diffusion models have been improved to generate images with better quality than previous generative models like GANs \cite{dhariwal2021diffusion,DBLP:conf/icml/NicholD21}.
% The state-of-the-art image-generating results from text-to-image diffusion models have drawn the attention of people
Moreover, diffusion models can be treated as a powerful generative prior and be applied to numerous conditional generation tasks.
One representative work with diffusion models is stable diffusion which can generate images according to given text prompts \cite{rombach2022high}.
Diffusion models are also applied to many low-level vision tasks. For instance, DDRM \cite{kawar2022denoising} performs diffusion sampling in the spectral space of degradation operator $\mathcal{A}$ to reconstruct the missing information in the observation $\boldsymbol{y}$.
DDNM \cite{wang2022ddnm} shares a similar idea with {DDRM} by refining the null-space of the operator $\mathcal{A}$ iteratively for image restoration tasks.
DPS \cite{DBLP:journals/corr/abs-2209-14687} endorses Laplacian approximation to calculate the gradient of log-likelihood for posterior sampling and it is capable of many noisy non-linear inverse problems.
In $\Pi$GDM \cite{song2023pseudoinverse}, the authors employ few approximations
to make the log-likelihood term tractable and hence make it able to solve inverse problems with even non-differentiable measurements.

\subsection{Multi-modal image fusion}
The deep learning-based multi-modality image fusion algorithms achieve effective feature extraction and information fusion through the powerful fitting ability of neural networks. Fusion algorithms are primarily divided into two branches: generative methods and discriminative methods.
For generative methods~\cite{ma2019fusiongan,ma2020infrared,DBLP:journals/tim/MaZSLX21}, particularly the GAN family, adversarial training~\cite{DBLP:conf/nips/GoodfellowPMXWOCB14,mao2017least,mirza2014conditional} is employed to generate fusion images following the same distribution as the source images.
For discriminative methods, auto encoder-based models~\cite{zhaoijcai2020,li2018densefuse,DBLP:journals/tim/LiWD20,DBLP:conf/mm/LiuLL021,DBLP:journals/corr/abs-2107-09011,DBLP:journals/inffus/LiWK21,DBLP:journals/ijcv/ZhangM21} use encoders and decoders to extract features and fuse them on a high-dimensional manifold.
Algorithm unfolding models~\cite{DBLP:journals/pami/0002D21,DBLP:journals/tip/GaoDXXD22,DBLP:journals/tcsv/ZhaoXZLZL22,DBLP:journals/corr/abs-2005-08448,DBLP:journals/corr/abs-2104-06977,li2023lrrnet} combine traditional optimization methods and neural networks, balancing efficiency and interpretability.
Unified models~\cite{xu2020aaai,DBLP:conf/aaai/ZhangXXGM20,9151265,DBLP:journals/inffus/ZhangLSYZZ20,DBLP:journals/tip/JungKJHS20} avoid the problem of lacking training data and ground truth for specific tasks.
Recently, CDDFuse~\cite{DBLP:journals/corr/abs-2211-14461} addresses cross-modality feature modeling and extracts modality-specific/shared features through a dual-branch Transformer-CNN architecture and correlation-driven loss, achieving promising fusion results in multiple fusion tasks.
On the other hand, fusion methods have been combined with pattern recognition tasks such as semantic segmentation~\cite{DBLP:journals/inffus/TangYM22} and object detection~\cite{DBLP:conf/cvpr/LiuFHWLZL22} to explore the interactions with downstream tasks. Specifically, TarDAL~\cite{DBLP:conf/cvpr/LiuFHWLZL22} demonstrates an obvious advantage in dealing with challenge scenarios with high efficiency. Self-supervised learning~\cite{Liang2022ECCV} is employed to train fusion networks without paired images. Moreover, the pre-processing registration module~\cite{DBLP:conf/cvpr/Xu0YLL22,huangreconet,DBLP:conf/ijcai/WangLFL22,xu2023murf} can enhance the robustness for unregistered input images. Benefiting from the multi-modality data, MSIS~\cite{jiang2022towards} achieves realizable and outstanding stitching results.
\subsection{Comparison with existing approaches}
The methods most relevant to our model are optimization-based methods and GAN-based generative methods.
Conventional optimization-based methods are often limited by manually designed loss functions, which may not be flexible enough to capture all relevant aspects and are sensitive to changes in the data distribution.
While incorporating natural image priors can provide extra knowledge that cannot be modeled by the generation loss function alone.
Then, in contrast to GAN-based generative methods, where unstable training and pattern collapse may occur, our DDFM achieves more stable and controllable fusion by rectifying the generation process towards source images and performing likelihood-based refinement in each iteration.
\section{Method}
In this section, we first present a novel approach for obtaining a fusion image by leveraging {DDPM posterior sampling}. Then, starting from the well-established loss function for image fusion, we derive a likelihood rectification approach for the unconditional DDPM sampling.
Finally, we propose the DDFM algorithm, which embeds the solution of the hierarchical Bayesian inference into the diffusion sampling. In addition, the rationality of the proposed algorithm will be demonstrated.
For brevity, we omit the derivations of some equations and refer interested readers to the \textit{supplementary material}.
It is worth noting that we use IVF as a case to illustrate our DDFM, and MIF can be carried out analogously to IVF.

\subsection{Fusing images via diffusion posterior sampling}
We first give the notation of the model formulation. Infrared, visible and fused images are denoted as $\boldsymbol{i}\!\in\!\mathbb{R}^{HW}$, $\boldsymbol{v}\!\in\!\mathbb{R}^{3HW}$ and $\boldsymbol{f}\!\in\!\mathbb{R}^{3HW}$, respectively.

We expect that the distribution of $\boldsymbol{f}$ given $\boldsymbol{i}$ and $\boldsymbol{v}$, \ie, $p\left(\boldsymbol{f}|\boldsymbol{i,\!v}\right)$, can be modeled, thus $\boldsymbol{f}$ can be obtained by sampling from the posterior distribution.
Inspired by \cref{eq:revers_sde}, we can express the reverse SDE of diffusion process as:
\begin{equation}
    \resizebox{.896\hsize}{!}{$d \boldsymbol{f}\!=\!\left[-\frac{\beta(t)}{2} \boldsymbol{f}\!-\!\beta(t) \nabla_{\boldsymbol{f}_t}\!\log p_t\left(\boldsymbol{f}_t|\boldsymbol{i,\!v}\right)\right]\!dt\!+\!\sqrt{\beta(t)} d \overline{\boldsymbol{w}}$},
\end{equation}
and the score function, \ie, $\nabla_{\boldsymbol{f}_t}\!\log p_t\!\left(\boldsymbol{f}_t|\boldsymbol{i,\!v}\right)$, can be calculated by:
\begin{equation}\label{eq:conditionscore}
    \resizebox{.9\hsize}{!}{$
        \begin{aligned}
            \nabla_{\boldsymbol{f}_t}\!\log p_t\!\left(\boldsymbol{f}_t|\boldsymbol{i,\!v}\right)\!&=\!\nabla_{\boldsymbol{f}_t}\!\log p_t\!\left(\boldsymbol{f}_t\right)\!+\!\nabla_{\boldsymbol{f}_t}\!\log p_t\!\left(\boldsymbol{i,\!v}|\boldsymbol{f}_t\right)\\
            &\approx \!\nabla_{\boldsymbol{f}_t}\!\log p_t\!\left(\boldsymbol{f}_t\right)\!+\!\nabla_{\boldsymbol{f}_t}\!\log p_t(\boldsymbol{i,\!v}|\tilde{\boldsymbol{f}}_{0|t})
        \end{aligned}
        $}
\end{equation}
where $\tilde{\boldsymbol{f}}_{0|t}$ is the estimation of $\boldsymbol{f}_0$ given $\boldsymbol{f}_t$ from the unconditional DDPM. The equality comes from {Bayes' theorem}, and the approximate equation is proved in \cite{DBLP:journals/corr/abs-2209-14687}.

In \cref{eq:conditionscore}, the first term represents the score function of unconditional diffusion sampling, which can be readily derived by the pre-trained DDPM. In the next section, we explicate the methodology for obtaining $\nabla_{\boldsymbol{f}_t}\!\log p_t(\boldsymbol{i,\!v}|\tilde{\boldsymbol{f}}_{0|t})$.

\subsection{Likelihood rectification for image fusion}\label{sec:BF}
Unlike the traditional image degradation inverse problem $\boldsymbol{y}=\mathcal{A}(\boldsymbol{x})+\boldsymbol{n}$ where $\boldsymbol{x}$ is the ground truth image, y is measurement and $\mathcal{A}(\cdot)$ is known, we can explicitly obtain its posterior distribution. However, it is not possible to explicitly express $p_t\!\left(\boldsymbol{i,\!v}|\boldsymbol{f}_t\right)$ or $p_t(\boldsymbol{i,\!v}|\tilde{\boldsymbol{f}}_{0|t})$ in image fusion. To address this, we start from the loss function and establish the relationship between the optimization loss function $\ell(\boldsymbol{i},\boldsymbol{v},\tilde{\boldsymbol{f}}_{0|t})$ and the likelihood $p_t(\boldsymbol{i},\!\boldsymbol{v}|\tilde{\boldsymbol{f}}_{0|t})$ of a probabilistic model.
For brevity, $\tilde{\boldsymbol{f}}_{0|t}$ is abbreviated as $\boldsymbol{f}$ in \cref{sec:321,sec:322}.
\subsubsection{Formulation of the likelihood model}\label{sec:321}
We first give a commonly-used loss function~\cite{DBLP:journals/inffus/LiWK21,DBLP:journals/ijcv/ZhangM21,ma2016infrared,DBLP:journals/corr/abs-2211-14461} for the image fusion task:
\begin{equation}\label{eq:lossfn0}
    \setlength{\abovedisplayskip}{5pt}
    \setlength{\belowdisplayskip}{5pt}
    \min_{\boldsymbol{f}}\|\boldsymbol{f}-\boldsymbol{i}\|_1+\phi\|\boldsymbol{f}-\boldsymbol{v}\|_1 .
\end{equation}
Then simple variable substitution $\boldsymbol{x}\!=\!\boldsymbol{f}\!-\!\boldsymbol{v}$ and $\boldsymbol{y}\!=\!\boldsymbol{i}\!-\!\boldsymbol{v}$ are implemented, and we get
\begin{equation}\label{eq:lossfn}
    \setlength{\abovedisplayskip}{5pt}
    \setlength{\belowdisplayskip}{5pt}
    \min_{\boldsymbol{x}}\|\boldsymbol{y}-\boldsymbol{x}\|_1+\phi\|\boldsymbol{x}\|_1.
\end{equation}
Since $\boldsymbol{y}$ is known and $\boldsymbol{x}$ is unknown, this $\ell_1$-norm optimization equation corresponds to the regression model:
$\boldsymbol{y}=\boldsymbol{kx} + \boldsymbol{\epsilon}$
with $\boldsymbol{k}$ fixed to $\boldsymbol{1}$. According to the relationship between regularization term and noise prior distribution, $\boldsymbol{\epsilon}$ should be a Laplacian noise and $\boldsymbol{x}$ is governed by the Laplacian distribution. Thus, in Bayesian fashion, we have:
\begin{equation}\label{eq:px-pyx}
    \small
    \begin{aligned}
        p(\boldsymbol{x})&\!=\!\mathcal{LAP}\left(\boldsymbol{x}; 0, \rho\right) \!=\!\prod_{i, j} \frac{1}{2 \rho} \exp \left(-\frac{\left|x_{i j}\right|}{\rho}\right),\\
        p(\boldsymbol{y}|\boldsymbol{x})&\!=\!\mathcal{LAP}\left(\boldsymbol{y};\boldsymbol{x},\gamma\right)\!=\!\prod_{i, j} \frac{1}{2 \gamma} \exp \left(-\frac{\left|y_{i j}\!-\!x_{i j}\right|}{\gamma}\right),
    \end{aligned}
\end{equation}
where $\mathcal{LAP}(\cdot)$ is the Laplacian distribution. $\rho$ and $\gamma$ are scale parameters of  $p(\boldsymbol{x})$ and $p(\boldsymbol{y}|\boldsymbol{x})$, respectively.

In order to prevent $\ell_1$-norm optimization in \cref{eq:lossfn} and inspired by \cite{ma2016infrared,ZHAO2020107734}, we give the \cref{proposition1}:
\begin{proposition}\label{proposition1}
    %    \vspace{-0.7em}
    For a random variable (RV) $\xi$ which obeys a Laplace distribution, it can be regarded as the coupling of a normally distributed RV and an exponentially distributed RV, which in formula:
    \begin{equation}
        \resizebox{.887\hsize}{!}{$
            \mathcal{LAP}(\xi ; \mu, \sqrt{b / 2})=\int_0^{\infty} \mathcal{N}(\xi ;\mu, a) \mathcal{EXP}(a ; b) d a.$}
    \end{equation}
\end{proposition}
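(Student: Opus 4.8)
The plan is to prove the identity as a pointwise equality of probability densities by directly evaluating the Gaussian scale mixture on the right-hand side. First I would exploit translation invariance: writing $u = \xi - \mu$, both the Laplace density $\mathcal{LAP}(\xi;\mu,\sqrt{b/2})$ and the Gaussian $\mathcal{N}(\xi;\mu,a)$ depend on $\xi$ only through $u$, so it suffices to establish the claim for $\mu = 0$. Fixing the conventions that $a$ denotes the Gaussian variance, so $\mathcal{N}(\xi;0,a) = (2\pi a)^{-1/2}\exp(-u^2/(2a))$, and that $\mathcal{EXP}(a;b) = \tfrac1b\exp(-a/b)$ is the exponential density on $a>0$ with mean $b$, the right-hand side collapses to the single integral
\[
I(u) = \int_0^\infty \frac{1}{b\sqrt{2\pi a}}\exp\!\left(-\frac{u^2}{2a}-\frac{a}{b}\right)\mathrm{d}a .
\]

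The core step is to evaluate $I(u)$ in closed form and recognize it as a Laplace density. I would appeal to the standard integral identity $\int_0^\infty a^{-1/2}\exp(-p/a - q a)\,\mathrm{d}a = \sqrt{\pi/q}\,\exp(-2\sqrt{pq})$ for $p,q>0$ (equivalently, this is a modified Bessel function $K_{1/2}$, which has the elementary closed form $K_{1/2}(z)=\sqrt{\pi/(2z)}e^{-z}$). Setting $p = u^2/2$ and $q = 1/b$ gives $I(u) = \frac{1}{b\sqrt{2\pi}}\sqrt{\pi b}\,\exp(-\sqrt{2/b}\,|u|) = \frac{1}{\sqrt{2b}}\exp(-|u|/\sqrt{b/2})$. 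Since the Laplace scale is $s=\sqrt{b/2}$, the prefactor $\tfrac{1}{\sqrt{2b}}$ equals $\tfrac{1}{2s}$ and the exponent is $-|u|/s$, so $I(u)=\mathcal{LAP}(\xi;\mu,\sqrt{b/2})$, completing the identity. The parameter bookkeeping—checking that the exponential rate is exactly the one turning $2\sqrt{pq}$ into $|u|/s$ and $\sqrt{\pi/q}/b$ into $1/(2s)$—is the only place where the stated scale $\sqrt{b/2}$ must be reconciled with the mixing parameter $b$.

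I expect the evaluation of $I(u)$ to be the main obstacle, since the integrand carries both $a$ and $1/a$ in the exponent and the integral is improper at both endpoints. To keep the computation self-contained rather than quoting a table, I would either substitute $a = \sqrt{b/2}\,|u|\,e^{\theta}$ to symmetrize the exponent into $-\sqrt{2/b}\,|u|\cosh\theta$ and invoke the integral representation of $K_{1/2}$, or differentiate $I(u)$ with respect to $|u|$ to obtain a first-order ODE that pins down the exponential decay rate together with the value at a reference point.

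A cleaner alternative, which I would include as a cross-check and which sidesteps the Bessel integral entirely, is to compare characteristic functions. Conditioning on $a$ and using the Gaussian characteristic function, the mixture has characteristic function $\int_0^\infty e^{-a\omega^2/2}\,\tfrac1b e^{-a/b}\,\mathrm{d}a = (1+\tfrac{b}{2}\omega^2)^{-1}$, which is exactly the characteristic function of a Laplace law with scale $\sqrt{b/2}$; by the uniqueness theorem for characteristic functions the two densities coincide. This route reduces the only nontrivial integral to an elementary rational one and independently confirms the parameter matching obtained above.
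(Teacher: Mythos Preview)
Your proof is correct. Both the direct evaluation via the $K_{1/2}$ Bessel identity and the characteristic-function cross-check are valid and the parameter bookkeeping matches: with $p=u^{2}/2$, $q=1/b$ one indeed obtains $\tfrac{1}{\sqrt{2b}}\exp(-|u|/\sqrt{b/2})$, which is $\mathcal{LAP}(\xi;\mu,\sqrt{b/2})$.

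Regarding comparison with the paper: the paper does \emph{not} supply a proof of this proposition in the main text. Proposition~1 is stated and attributed to prior work (the references preceding it), and the surrounding section explicitly defers omitted derivations to the supplementary material; only Propositions~2 and~3 carry inline proofs. So your write-up is strictly more than what the paper itself presents. If anything, your characteristic-function argument is the cleanest self-contained route and would serve well as the primary proof, with the Bessel computation relegated to a remark; but either suffices.
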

\begin{remark}\label{remark1}
In \cref{proposition1}, we transform $\ell_1$-norm optimization into an $\ell_2$-norm optimization with latent variables, avoiding potential non-differentiable points in $\ell_1$-norm.
\end{remark}

Therefore, $p(\boldsymbol{x})$ and $p(\boldsymbol{y}|\boldsymbol{x})$ in \cref{eq:px-pyx} can be rewritten as the following hierarchical Bayesian framework:
\begin{equation}\label{eq:HBE}
    \left\{\begin{array}{l}
        y_{i j} | x_{i j}, m_{ij} \sim \mathcal{N}\left(y_{i j} ; x_{i j}, m_{ij}\right) \\
        m_{ij} \sim \mathcal{EXP}\left(m_{ij} ; \gamma\right) \\
        x_{i j} | n_{i j} \sim \mathcal{N}\left(x_{i j} ; 0, n_{i j}\right) \\
        n_{i j} \sim \mathcal{EXP}\left(n_{i j} ; \rho\right)
    \end{array}\right.
\end{equation}
where $i=1, \ldots, H$ and $j=1, \ldots, W$.
Through the above probabilistic analysis, the optimization problem in \cref{eq:lossfn} can be transformed into a {maximum likelihood} inference problem.

In addition, following~\cite{ma2016infrared,DBLP:journals/inffus/TangYM22}, the total variation penalty item $r(\boldsymbol{x}) = \|\nabla \boldsymbol{x}\|_2^2$ can be also added to make the fusion image $\boldsymbol{f}$ better preserve the texture information from $\boldsymbol{v}$, where $\nabla$ denotes the gradient operator. Ultimately, the log-likelihood function of the probabilistic inference issue is:
\begin{equation}\label{eq:loss}
    \begin{aligned}
        \ell(\boldsymbol{x}) & =\log p(\boldsymbol{x}, \boldsymbol{y})-r(\boldsymbol{x}) \\
        & =-\sum_{i, j}\left[\frac{\left(x_{i j}-y_{i j}\right)^2}{2 m_{ij}}+\frac{x_{i j}^2}{2 n_{i j}}\right]-\frac{\psi}{2}\|\nabla \boldsymbol{x}\|_2^2,
    \end{aligned}
\end{equation}
and probabilistic graph of this hierarchical Bayesian model is in \cref{fig:introduction1}\red{b}. Notably, in this way, we transform the optimization problem \cref{eq:lossfn0} into a maximum likelihood problem of a probability model \cref{eq:loss}.
Additionally, unlike traditional optimization methods that require manually specified tuning coefficients $\phi$ in \cref{eq:lossfn0},  $\phi$ in our model can be adaptively updated by inferring the latent variables, enabling the model to better fit different data distributions. The validity of this design has also been verified in ablation experiments in \cref{sec:ablation}.
We will then explore how to infer it in the next section.
\subsubsection{Inference the likelihood model via EM algorithm}\label{sec:322}
In order to solve the maximum log-likelihood problem in \cref{eq:loss}, which can be regarded as an optimization problem with latent variables, we use the \textit{Expectation Maximization} (EM) algorithm to obtain the optimal $\boldsymbol{x}$.
In \textit{E-step}, it calculates the expectation of log-likelihood function with respect to $p\left(\boldsymbol{a}, \boldsymbol{b} | \boldsymbol{x}^{(t)}, \boldsymbol{y}\right)$, \ie, the so-called $\mathcal{Q}$-function:%, is calculated by:
\begin{equation}\label{eq:Q-func}
    \mathcal{Q}\left(\boldsymbol{x} | \boldsymbol{x}^{(t)}\right)=\mathbb{E}_{\boldsymbol{a}, \boldsymbol{b} | \boldsymbol{x}^{(t)}, \boldsymbol{y}}[\ell(\boldsymbol{x})].
\end{equation}
Then in\textit{ M-step}, the optimal $\boldsymbol{x}$ is obtained by% maximize the $\mathcal{Q}$-function:
\begin{equation}
    \boldsymbol{x}^{(t+1)}=\arg \max_{\boldsymbol{x}} \mathcal{Q}\left(\boldsymbol{x} | \boldsymbol{x}^{(t)}\right).
\end{equation}
Next, we will show the implementation detail in each step.

\bfsection{E-step} \cref{proposition2} gives the calculation results for the conditional expectation of latent variables, and then gets the derivation of $\mathcal{Q}$-function.
\begin{proposition}\label{proposition2}
    %    \vspace{-0.7em}
    The conditional expectation of the latent variable ${1}/{m_{ij}}$ and ${1}/{n_{ij}}$ in \cref{eq:loss} are:
    \begin{equation}\label{eq:E-step}
        \small
        \begin{aligned} 		\mathbb{E}_{m_{ij}|x_{ij}^{(t)},y_{ij}}\!\left[\frac{1}{m_{ij}}\right]\!&=\!\sqrt{\frac{2(y_{ij}-x_{ij}^{(t)})^2}{\gamma}},\\
            \mathbb{E}_{n_{ij}|x_{ij}^{(t)}}\!\left[\frac{1}{n_{ij}}\right]\!&=\! \sqrt{\frac{2[x_{ij}^{(t)}]^2}{\rho}}.
        \end{aligned}
    \end{equation}
\end{proposition}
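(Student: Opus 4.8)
The plan is to treat each latent variable separately and compute its posterior expectation directly from the hierarchical model in \cref{eq:HBE}. Consider $m_{ij}$ first. Since $m_{ij}$ enters only through the Gaussian factor $y_{ij}\mid x_{ij},m_{ij}\sim\mathcal{N}(y_{ij};x_{ij},m_{ij})$ and its exponential prior, Bayes' theorem gives a posterior $p(m_{ij}\mid x_{ij}^{(t)},y_{ij})\propto m_{ij}^{-1/2}\exp\!\big(-\tfrac{(y_{ij}-x_{ij}^{(t)})^2}{2m_{ij}}-\tfrac{m_{ij}}{\gamma'}\big)$, where $\gamma'$ is the exponential scale fixed by the scale-mixture correspondence in \cref{proposition1}. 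I would then write the target quantity as an explicit ratio of integrals,
\[
\mathbb{E}\!\left[\tfrac{1}{m_{ij}}\,\middle|\,x_{ij}^{(t)},y_{ij}\right]=\frac{\int_0^\infty m^{-3/2}e^{-A/m-Bm}\,dm}{\int_0^\infty m^{-1/2}e^{-A/m-Bm}\,dm},
\]
with $A=\tfrac12(y_{ij}-x_{ij}^{(t)})^2$ and $B=1/\gamma'$, so that the constant prefactors of the Gaussian and exponential densities cancel between numerator and denominator.

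The key step is to evaluate this ratio without grinding through the two integrals individually. I would substitute $m=\sqrt{A/B}\,u$, which turns each integral into $(A/B)^{(\nu+1)/2}J_\nu$ with $J_\nu=\int_0^\infty u^{\nu}e^{-c(u+1/u)}\,du$ and $c=\sqrt{AB}$, where $\nu=-3/2$ for the numerator and $\nu=-1/2$ for the denominator. The further substitution $u\mapsto 1/u$ shows $J_\nu=J_{-\nu-2}$, hence $J_{-3/2}=J_{-1/2}$, so the two symmetric integrals are \emph{equal} and cancel. Equivalently, the posterior is a generalized inverse Gaussian law whose moments are ratios of modified Bessel functions $K_\nu$, and the half-integer identity $K_{-1/2}=K_{1/2}$ collapses the ratio. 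What survives is only the power of $A/B$, namely $(A/B)^{-1/2}=\sqrt{B/A}$, which delivers the closed form asserted in the proposition in terms of $(y_{ij}-x_{ij}^{(t)})$ and $\gamma$.

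Finally, I would treat $n_{ij}$ in exactly the same way: it appears only in $x_{ij}\mid n_{ij}\sim\mathcal{N}(x_{ij};0,n_{ij})$ with an exponential prior whose scale is tied to $\rho$, so its posterior given $x_{ij}^{(t)}$ has the identical functional form with $A=\tfrac12[x_{ij}^{(t)}]^2$ and the corresponding $\rho$-dependent $B$, and the same cancellation yields $\mathbb{E}[1/n_{ij}\mid x_{ij}^{(t)}]$. I expect the main obstacle to be bookkeeping rather than conceptual: one must pin down the exponential scale consistent with \cref{proposition1} (the paper's $\gamma,\rho$ versus the scale-mixture parameter $b$) so that the final constants match, and one must justify convergence and the interchange underlying the Bessel-type identity $\int_0^\infty m^{\nu-1}e^{-A/m-Bm}\,dm=2(A/B)^{\nu/2}K_\nu(2\sqrt{AB})$. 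Once the scale correspondence is fixed, the $u\mapsto 1/u$ symmetry makes both expectations fall out immediately.
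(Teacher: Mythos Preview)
Your argument is correct and takes a genuinely different route from the paper. The paper does not compute the ratio of integrals you set up; instead it changes variables to $\tilde m_{ij}=1/m_{ij}$, observes that the prior on $\tilde m_{ij}$ is inverse-gamma, and then recognises the posterior $p(\tilde m_{ij}\mid y_{ij},x_{ij})$ as an \emph{inverse Gaussian} law $\mathcal{IN}(\mu,\lambda)$ by matching the $-\tfrac32\log\tilde m-\tfrac{\tilde m(y-x)^2}{2}-\tfrac{1}{\gamma\tilde m}$ shape to the standard density. The expectation is then simply the mean parameter $\mu$. Your route stays with $m_{ij}$ (whose posterior is a generalized inverse Gaussian) and extracts $\mathbb E[1/m_{ij}]$ via the $u\mapsto 1/u$ symmetry, equivalently the Bessel identity $K_{-1/2}=K_{1/2}$. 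The paper's approach is shorter if one already knows the inverse Gaussian family; yours is more self-contained and makes explicit \emph{why} the closed form exists (the half-integer Bessel collapse), which is pedagogically useful.

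One substantive remark on your ``bookkeeping'' caveat: your ratio $\sqrt{B/A}$ gives $\mathbb E[1/m_{ij}]\propto |y_{ij}-x_{ij}^{(t)}|^{-1}$, whereas the displayed formula in the proposition has $(y_{ij}-x_{ij}^{(t)})^2$ in the \emph{numerator} under the root. This is not a matter of choosing $\gamma'$ versus $\gamma$; the data-dependent factor is on opposite sides. If you carry out the paper's own matching of \cref{eq:PostA} to the standard inverse Gaussian density $\propto x^{-3/2}\exp\!\big(-\tfrac{\lambda x}{2\mu^2}-\tfrac{\lambda}{2x}\big)$, you obtain $\lambda=2/\gamma$ and $\mu=\sqrt{2/(\gamma(y_{ij}-x_{ij})^2)}$, in agreement with your computation and with the usual E-step for Laplace scale mixtures (where the reweighting behaves like $1/|y-x|$ to reproduce the $\ell_1$ term in the $\mathcal{Q}$-function). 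So your derivation is consistent with the paper's method; the displayed exponent placement in \cref{eq:E-step,eq:PostA2} appears to be a typographical slip.
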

\begin{proof}
    For convenience, we set $\tilde{m}_{i j}\!\equiv\!1 / m_{ij}$ and $\tilde{n}_{i j}\!\equiv\!1 / n_{i j}$.
    From \cref{eq:HBE} we know that $m_{ij}\!\sim\!\mathcal{EXP}\left(m_{ij};\gamma\right)\!=\! \Gamma(m_{ij};1,\gamma)$. Thus, $\tilde{m}_{i j}\!\sim\!\mathcal{IG}\left(1,\gamma\right)$, where $\Gamma(\cdot,\cdot)$ and $\mathcal{IG}(\cdot,\cdot)$ are the gamma distribution and inverse gamma distribution, respectively.

    Then we can get the posterior of $\tilde{m}_{i j}$ by Bayes' theorem:
    \begin{equation}\label{eq:PostA}
        \small
        \begin{aligned}
            & \log p\left(\tilde{m}_{i j} | y_{i j}, x_{i j}\right)=\log p\left(y_{i j} | x_{i j}, m_{ij}\right)+\log p\left(\tilde{m}_{i j}\right) \\
            =& -\frac{3}{2} \log \tilde{m}_{i j}-\frac{\tilde{m}_{i j}\left(y_{i j}-x_{i j}\right)^2}{2}-\frac{1}{\gamma \tilde{m}_{i j}}+\text{ constant}.
        \end{aligned}
    \end{equation}
    Subsequently, we have
    \begin{equation}\label{eq:PostA2}
        \resizebox{.887\hsize}{!}{$
            p\left(\tilde{m}_{i j} | y_{i j}, x_{i j}\right)\!=\!\mathcal{I N}\!\left(\tilde{m}_{i j};\sqrt{2\left(y_{i j}\!-\!x_{i j}\right)^2 / \gamma}, 2 / \gamma\right),$}
    \end{equation}
    where $\mathcal{IN}(\cdot,\cdot)$ is the inverse Gaussian distribution. For the posterior of $\tilde{n}_{i j}$, it can be obtain similar to \cref{eq:PostA}:
    \begin{equation}\label{eq:PostB}
        \small
        \begin{aligned}
            & \log p\left(\tilde{n}_{i j} | x_{i j}\right)=\log p\left(x_{i j}  |  n_{i j}\right)+\log p\left(\tilde{n}_{i j}\right) \\
            & =-\frac{3}{2} \log \tilde{n}_{i j}-\frac{\tilde{n}_{i j} x_{i j}^2}{2}-\frac{1}{\rho \tilde{n}_{i j}}+\text{constant},
        \end{aligned}
    \end{equation}
    and therefore
    \begin{equation}\label{eq:PostB2}
        p\left(\tilde{n}_{i j} | x_{i j}\right)=\mathcal{I N}\left(\tilde{n}_{i j} ; \sqrt{2 x_{i j}^2 / \rho}, 2 / \rho\right).
    \end{equation}
    Finally, the conditional expectation of ${1}/{m_{ij}}$ and ${1}/{n_{ij}}$ are the mean parameters of the corresponding inverse Gaussian distribution \cref{eq:PostA2,eq:PostB2}, respectively.
\end{proof}
\begin{remark}\label{remark2}
The conditional expectation computed by \cref{proposition2} will be used to derive the $\mathcal{Q}$-function below.
\end{remark}
Afterwards, the $\mathcal{Q}$-function \cref{eq:Q-func} is derived as:
\begin{equation}
    \small
    \begin{aligned}
        \mathcal{Q} & =-\!\sum_{i, j}\!\left[\frac{\bar{m}_{ij}}{2}\!\left(x_{i j}\!-\!y_{i j}\right)^2\!+\!\frac{\bar{n}_{ij}}{2} x_{i j}^2\right]-\frac{\psi}{2}\|\nabla \boldsymbol{x}\|_2^2 \\
        & \propto -\left\|\boldsymbol{m} \odot(\boldsymbol{x}-\boldsymbol{y})\right\|_2^2-\left\|\boldsymbol{n} \odot \boldsymbol{x}\right\|_2^2-\psi\|\nabla \boldsymbol{x}\|_2^2,
    \end{aligned}
\end{equation}
where $\bar{m}_{ij}$ and $\bar{n}_{ij}$ represent $\mathbb{E}_{m_{ij}|x_{ij}^{(t)},y_{ij}}\!\left[{1}/{m_{ij}}\right]$ and $\mathbb{E}_{n_{ij}|x_{ij}^{(t)}}\!\left[{1}/{n_{ij}}\right]$ in \cref{eq:E-step}, respectively. $\odot$ is the element-wise multiplication. $\boldsymbol{m}$ and $\boldsymbol{n}$ are matrices with each element being $\sqrt{\bar{m}_{ij}}$ and $\sqrt{\bar{n}_{ij}}$, respectively.

\bfsection{M-step} Here, we need to minimize the negative $Q$-function with respect to $\boldsymbol{x}$. The half-quadratic splitting algorithm is employed to deal with this problem, i.e.,
\begin{equation}\label{eq:splitting}
    \begin{aligned}
        &\min_{\boldsymbol{x},\boldsymbol{u},\boldsymbol{k}} ||\boldsymbol{m}\odot(\boldsymbol{x}-\boldsymbol{y})||_2^2+||\boldsymbol{n}\odot \boldsymbol{x}||_2^2+ \psi||\boldsymbol{u}||_2^2, \\
        &{\rm s.t.}\quad \boldsymbol{u}=\nabla \boldsymbol{k}, \boldsymbol{k}=\boldsymbol{x}.
    \end{aligned}
\end{equation}
It can be further cast into the following unconstraint optimization problem,
\begin{equation}\label{eq:unconstraint}
    \begin{aligned}
        \min_{\boldsymbol{x},\boldsymbol{u},\boldsymbol{k}} & ||\boldsymbol{m}\odot(\boldsymbol{x}-\boldsymbol{y})||_2^2+||\boldsymbol{n}\odot \boldsymbol{x}||_2^2+ \psi||\boldsymbol{u}||_2^2  \\
        &+ \frac{\eta}{2}\left( ||\boldsymbol{u}-\nabla \boldsymbol{k}||_2^2+||\boldsymbol{k}-\boldsymbol{x}||_2^2 \right) .
    \end{aligned}
\end{equation}
The unknown variables $\boldsymbol{k},\boldsymbol{u},\boldsymbol{x}$ can be solved iteratively in the coordinate descent fashion.

\noindent\textbf{Update $\boldsymbol{k}$}: It is a deconvolution issue,
\begin{equation}
    \min_{\boldsymbol{k}} \mathcal{L}_{\boldsymbol{k}} = ||\boldsymbol{k}-\boldsymbol{x}||_2^2+||\boldsymbol{u}-\nabla \boldsymbol{k}||^2_2.
\end{equation}
It can be efficiently solved by the fast Fourier transform (fft) and inverse fft (ifft) operators, and the solution of $\boldsymbol{k}$ is
\begin{equation}\label{eq:K}
    \boldsymbol{k} = {\rm ifft} \left\lbrace \frac{{\rm fft}(\boldsymbol{x})+\overline{{\rm fft}(\nabla)}\odot{\rm fft}(\boldsymbol{u})}{1+\overline{{\rm fft}(\nabla)}\odot{\rm fft}(\nabla)} \right\rbrace,
\end{equation}
where $\overline{\cdot}$ is the complex conjugation.

\noindent\textbf{Update $\boldsymbol{u}$}: It is an $\ell_2$-norm penalized regression issue,
\begin{equation}
    \min_{\boldsymbol{u}} \mathcal{L}_{\boldsymbol{u}} = \psi||\boldsymbol{u}||_2^2+\frac{\eta}{2} ||\boldsymbol{u}-\nabla \boldsymbol{k}||_2^2.
\end{equation}
The solution of $\boldsymbol{u}$ is
\begin{equation}\label{eq:U}
    \boldsymbol{u} = \frac{\eta}{2\psi+\eta}\nabla \boldsymbol{k}.
\end{equation}

\noindent\textbf{Update $\boldsymbol{x}$}: It is a least squares issue,
\begin{equation}\label{eq:minX}
    \min_{\boldsymbol{x}} \mathcal{L}_{\boldsymbol{x}} = ||\boldsymbol{m}\odot(\boldsymbol{x}-\boldsymbol{y})||_2^2+||\boldsymbol{n}\odot \boldsymbol{x}||_2^2+\frac{\eta}{2}||\boldsymbol{k}-\boldsymbol{x}||_2^2.
\end{equation}
The solution of $\boldsymbol{x}$ is
\begin{equation} \label{eq:X}
    \boldsymbol{x} = (2\boldsymbol{m}^2\odot \boldsymbol{y}+\eta \boldsymbol{k}) \oslash (2\boldsymbol{m}^2+2\boldsymbol{n}^2+\eta),
\end{equation}
where $\oslash$ denotes the element-wise division, and final estimation of $\boldsymbol{f}$ is
\begin{equation} \label{eq:F}
    \boldsymbol{\hat{f}} = \boldsymbol{x} + \boldsymbol{v}.
\end{equation}

Additionally, hyper-parameter $\gamma$ and $\rho$ in \cref{eq:px-pyx} can be also updated after the sampling from $\boldsymbol{x}$ (\cref{eq:X}) by
\begin{equation}\label{eq:hp1}
    \gamma = \frac{1}{hw}\sum_{i,j} \mathbb{E}[m_{ij}],\ \rho = \frac{1}{hw}\sum_{i,j}\mathbb{E}[n_{ij}].
\end{equation}

\subsection{DDFM}
\bfsection{Overview}
In \cref{sec:BF}, we present a methodology for obtaining a hierarchical Bayesian model from existing loss function and perform the model inference via the EM algorithm.
In this section, we present our DDFM, where the inference solution and diffusion sampling are integrated within the same iterative framework for generating $\boldsymbol{f}_{0}$ given $\boldsymbol{i}$ and $\boldsymbol{v}$. The algorithm is illustrated in \cref{alg:moal,fig:introduction2}.

There are two modules in DDFM, the \textit{unconditional diffusion sampling}~(UDS) module and the \textit{likelihood rectification}, or say, {EM module}.
The UDS module is utilized to provide natural image priors, which improve the visual plausibility of the fused image. The EM module, on the other hand, is responsible for rectifying the output of UDS module via likelihood to preserve more information from the source images.

\bfsection{Unconditional diffusion sampling module}
In \cref{sec:DDPM}, we briefly introduce diffusion sampling. In \cref{alg:moal}, UDS (in \textit{grey}) is partitioned into two components, where the first part estimates $\tilde{\boldsymbol{f}}_{0|t}$ using $\boldsymbol{f}_t$, and the second part estimates $\boldsymbol{f}_{t-1}$ using both $\boldsymbol{f}_t$ and $\hat{\boldsymbol{f}}_{0|t}$. From the perspective of score-based DDPM in \cref{eq:conditionscore}, a pre-trained DDPM can directly output the current $\nabla_{\boldsymbol{f}_t}\!\log p_t\!\left(\boldsymbol{f}_t\right)$, while $\nabla_{\boldsymbol{f}_t}\!\log p_t(\boldsymbol{i,\!v}|\tilde{\boldsymbol{f}}_{0|t})$ can be obtain by the EM module.

\bfsection{EM module}
The role of the EM module is to update $\tilde{\boldsymbol{f}}_{0|t}\!\Rightarrow\!\hat{\boldsymbol{f}}_{0|t}$.
In \cref{alg:moal,fig:introduction2}, the EM algorithm (in \textit{blue} and \textit{yellow}) is inserted in UDS (in \textit{grey}). The preliminary estimate $\tilde{\boldsymbol{f}}_{0|t}$ produced by DDPM sampling (\textit{line 5}) is utilized as the initial input for the EM algorithm to obtain $\hat{\boldsymbol{f}}_{0|t}$ (\textit{line 6-13}), which is an estimation of the fused image subjected to likelihood rectification.
In other words, EM module rectify $\tilde{\boldsymbol{f}}_{0|t}$ to $\hat{\boldsymbol{f}}_{0|t}$ to meet the likelihood.

\begin{algorithm}[t]
    \caption{DDFM}
    \label{alg:moal}
    \begin{algorithmic}[1]
        \REQUIRE ~~\\ % Input
        Infrared image $\boldsymbol{i}$, Visible image $\boldsymbol{v}$, $T$, $\left\{\tilde{\sigma}_t\right\}_{t=1}^T$
        \ENSURE ~~\\ % Output
        Fused image $\boldsymbol{f}_0$.
        \STATE $\boldsymbol{f}_T \sim \mathcal{N}(\mathbf{0}, \mathbf{I})$
        \FOR {$t=T-1$ \text{\textbf{to}} 0}
        \vspace{-0.5em}
        \definecolor{shadecolor}{rgb}{0.92,0.92,0.92}
        \begin{shaded}
            \vspace{-0.7em}
            \STATE \% \textit{DDPM Part 1: Obtain} $\tilde{\boldsymbol{f}}_{0|t}$
            \STATE $\hat{\boldsymbol{s}} \leftarrow \boldsymbol{s}_\theta\left(\boldsymbol{f}_t, t\right)$
            \STATE $\tilde{\boldsymbol{f}}_{0|t} \leftarrow \frac{1}{\sqrt{\bar{\alpha}_t}}\left(\boldsymbol{f}_t+\left(1-\bar{\alpha}_t\right) \hat{\boldsymbol{s}}\right)$
        \end{shaded}
        \vspace{-1.4em}
        \definecolor{shadecolor}{rgb}{0.54, 0.81, 0.94}
        \begin{shaded}
            \vspace{-0.7em}
            \STATE \% \textit{E-step: Update latent variables}
            \STATE $\tilde{\boldsymbol{x}}_{0}=\tilde{\boldsymbol{f}}_{0|t} - \boldsymbol{v}$, $\boldsymbol{y}=\boldsymbol{i}-\boldsymbol{v}$
            \STATE Evaluate expectations by \cref{eq:E-step}.
            \STATE Update hyper-parameters $\gamma,\rho$ by \cref{eq:hp1}.
        \end{shaded}
        \vspace{-1.4em}
        \definecolor{shadecolor}{rgb}{0.9,0.9,0.5}
        \begin{shaded}
            \vspace{-0.8em}
            \STATE \% \textit{M-step: Obtain $\hat{\boldsymbol{f}}_{0|t}$ via Likelihood Rectification}
            \STATE $\boldsymbol{k} \leftarrow \arg\min_{\boldsymbol{k}} \mathcal{L}_{\boldsymbol{k}}\left(\tilde{\boldsymbol{x}}_{0},\boldsymbol{u}\right)$ \quad (\cref{eq:K})
            \STATE $\boldsymbol{u} \leftarrow \arg\min_{\boldsymbol{u}} \mathcal{L}_{\boldsymbol{u}}\left(\nabla \boldsymbol{k}\right)$  \ \  \quad (\cref{eq:U})
            \STATE $\hat{\boldsymbol{f}}_{0|t} \leftarrow \arg\min_{\boldsymbol{x}} \mathcal{L}_{\boldsymbol{x}}\left(\boldsymbol{k},\boldsymbol{u},\tilde{\boldsymbol{x}}_{0}\right)+\boldsymbol{v}$ \  (Eq.~(\ref{eq:X})\&(\ref{eq:F}))%(\cref{eq:X,eq:F})
        \end{shaded}
        \vspace{-1.4em}
        \definecolor{shadecolor}{rgb}{0.92,0.92,0.92}
        \begin{shaded}
            \vspace{-0.7em}
            \STATE \% \textit{DDPM Part 2: Estimate} $\boldsymbol{f}_{t-1}$
            \STATE $\boldsymbol{z} \sim \mathcal{N}(\mathbf{0}, \mathbf{I})$
            \STATE $\boldsymbol{f}_{t-1} \leftarrow \frac{\sqrt{\alpha_t}\left(1-\bar{\alpha}_{t-1}\right)}{1-\bar{\alpha}_t} \boldsymbol{f}_t+\frac{\sqrt{\bar{\alpha}_{t-1}} \beta_t}{1-\bar{\alpha}_t} \hat{\boldsymbol{f}}_{0|t}+\tilde{\sigma}_t \boldsymbol{z}$
            \vspace{-0.9em}
        \end{shaded}
        \vspace{-0.3em}
        \ENDFOR
    \end{algorithmic}
\end{algorithm}

\begin{figure}[t]
    \vspace{-0.5em}
    \centering
    \includegraphics[width=\linewidth]{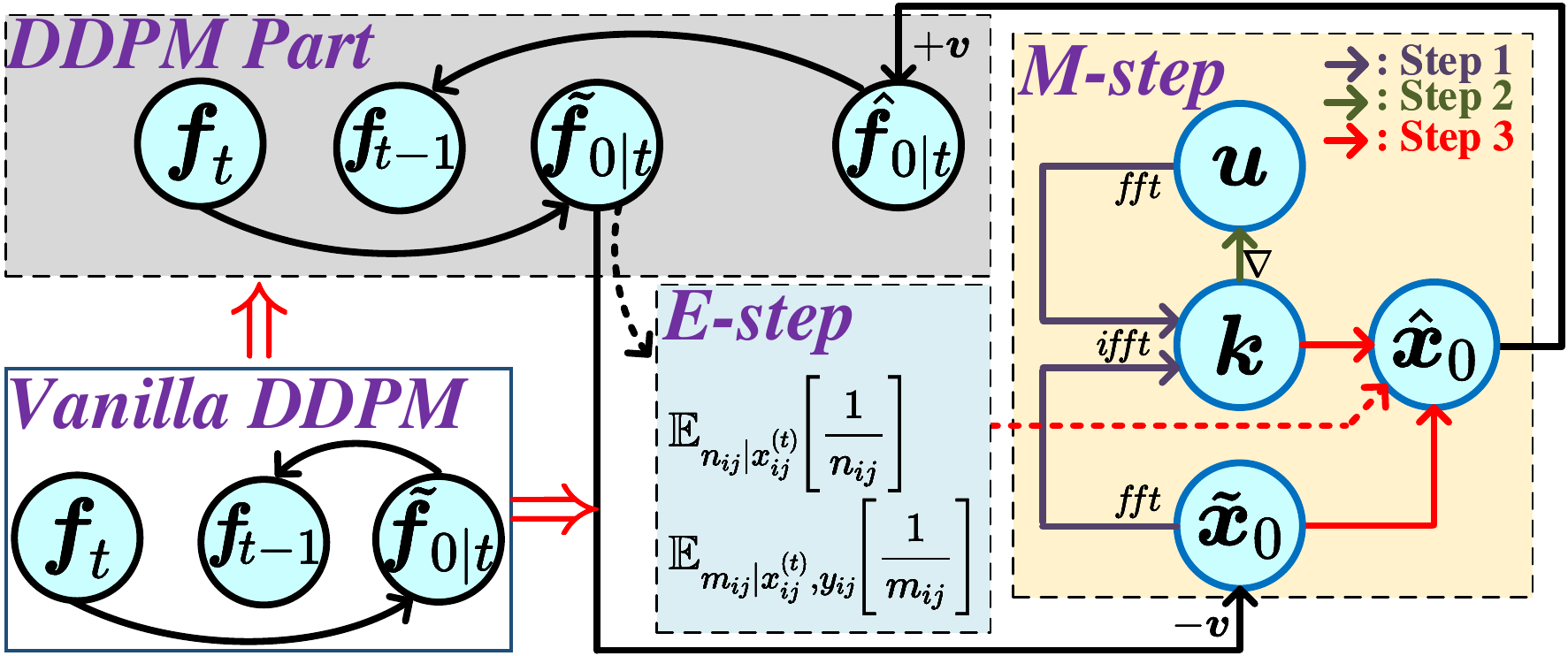}
    \caption{Computational graph of our DDFM in one iteration. Different from the vanilla DDPM, likelihood rectification is completed via the EM algorithm, \ie, the update from $\tilde{\boldsymbol{f}}_{0|t} \Rightarrow \hat{\boldsymbol{f}}_{0|t}$.}
    \label{fig:introduction2}
    %\vspace{-0.2em}
\end{figure}

\begin{figure*}[t]
    \centering
    \includegraphics[width=\linewidth]{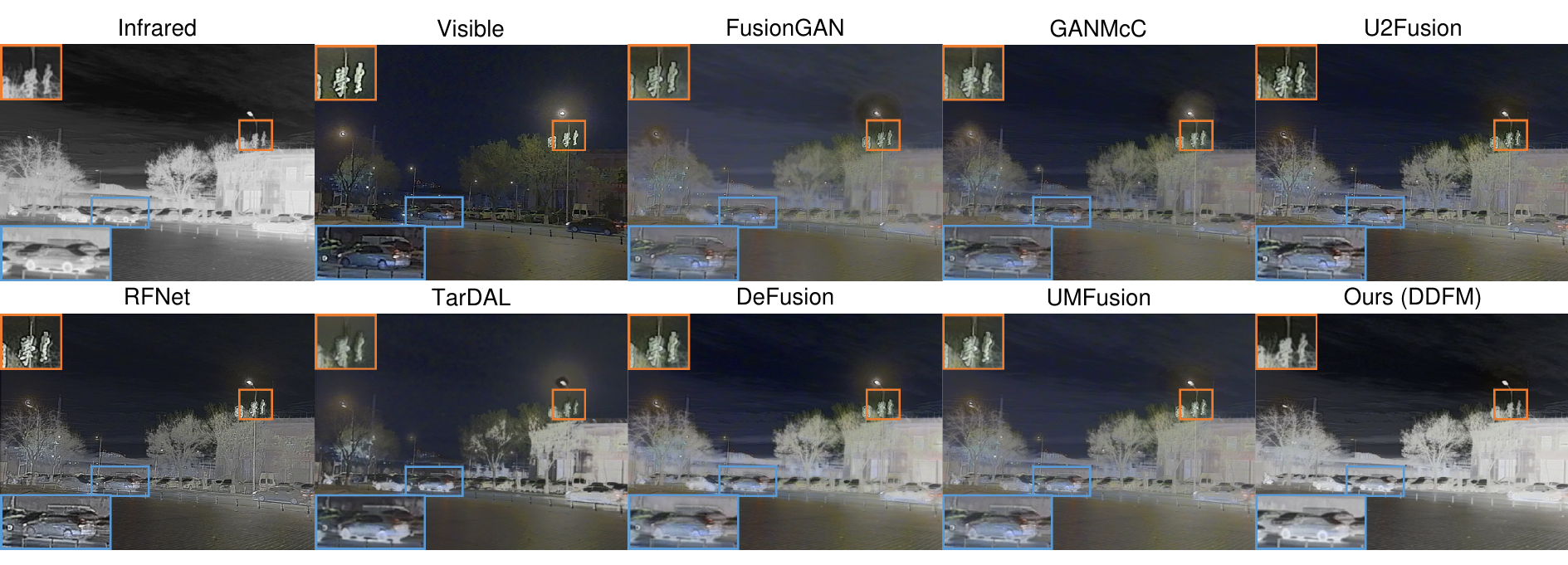}
    %\vspace{-0.8em}
    \caption{Visual comparison of ``01462'' from M$^3$FD IVF dataset.}
    \label{fig:IVF2}
\end{figure*}

\begin{figure*}[t]
    %\vspace{-1.2em}
    \centering
    \includegraphics[width=\linewidth]{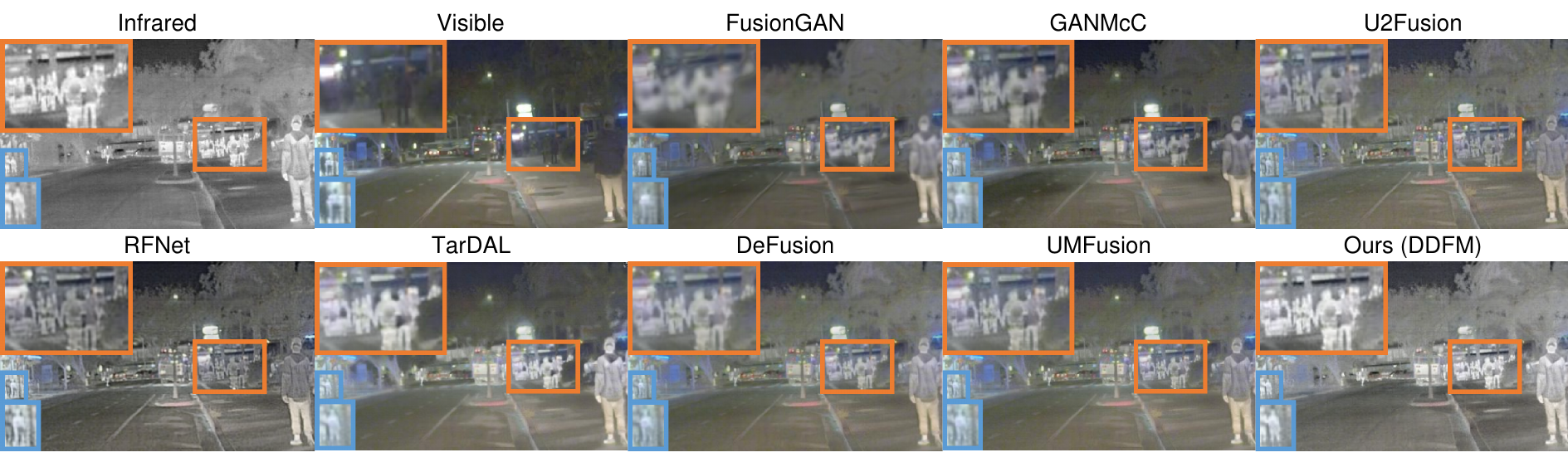}
    %\vspace{-0.8em}
    \caption{Visual comparison of ``FLIR\_08248'' from RoadScene IVF dataset.}
    \label{fig:IVF3}
    %\vspace{-1.3em}
\end{figure*}

\begin{table*}[t]
    \centering
    \caption{The quantitative results of IVF task, with the best and second-best values in \textbf{boldface} and \underline{underline}, respectively.}
    \label{tab:Quantitative}%
    \resizebox{\linewidth}{!}{
        \begin{tabular}{cccccccccccccc}
            \toprule
            \multicolumn{7}{c}{\textbf{Dataset: MSRS Fusion Dataset}~\cite{DBLP:journals/inffus/TangYZJM22}}                              &                              \multicolumn{7}{c}{\textbf{Dataset: M$^3$FD Fusion Dataset}~\cite{DBLP:conf/cvpr/LiuFHWLZL22}}                               \\
            &  EN $\uparrow$   &   SD $\uparrow$   &  MI $\uparrow$   &  VIF $\uparrow$  & Qabf $\uparrow$  & SSIM $\uparrow$  &                                        &  EN $\uparrow$   &   SD $\uparrow$   &  MI $\uparrow$   &  VIF $\uparrow$  & Qabf $\uparrow$  & SSIM $\uparrow$  \\ \midrule
            FGAN~\cite{ma2019fusiongan}       &       5.60       &       17.81       &       1.29       &       0.40       &       0.13       &       0.47       &      FGAN~\cite{ma2019fusiongan}       &       6.51       &       28.14       &       2.07       &       0.44       &       0.30       &       0.75       \\
            GMcC~\cite{DBLP:journals/tim/MaZSLX21} &       6.20       &       25.95       &       1.79       &       0.57       &       0.28       &       0.74       & GMcC~\cite{DBLP:journals/tim/MaZSLX21} &       6.68       &       32.23       &       2.01       &       0.58       &       0.36       &       0.93       \\
            U2F~\cite{9151265}           &       6.06       &       29.80       &       1.55       &       0.59       &       0.46       &       0.76       &           U2F~\cite{9151265}           & \underline{6.84} &       34.05       &       1.95       & \underline{0.73} & \underline{0.49} &  \textbf{0.98}   \\
            RFN~\cite{DBLP:conf/cvpr/Xu0YLL22}   &       6.07       &       26.82       &       1.36       &       0.54       &       0.46       &       0.81       &   RFN~\cite{DBLP:conf/cvpr/Xu0YLL22}   &       6.67       &       31.04       &       1.71       &       0.67       &       0.44       &       0.91       \\
            TarD~\cite{DBLP:conf/cvpr/LiuFHWLZL22} &       5.39       &       22.74       &       1.32       &       0.38       &       0.16       &       0.45       & TarD~\cite{DBLP:conf/cvpr/LiuFHWLZL22} &       6.67       & \underline{38.83} & \underline{2.38} &       0.54       &       0.29       &       0.87       \\
            DeF~\cite{Liang2022ECCV}        & \underline{6.85} & \underline{40.20} & \underline{2.25} & \underline{0.74} & \underline{0.56} & \underline{0.92} &        DeF~\cite{Liang2022ECCV}        &       6.79       &       36.39       &       2.32       &       0.65       &       0.44       &       0.94       \\
            UMF~\cite{DBLP:conf/ijcai/WangLFL22}  &       5.98       &       23.56       &       1.38       &       0.47       &       0.29       &       0.58       &  UMF~\cite{DBLP:conf/ijcai/WangLFL22}  &       6.73       &       32.46       &       2.23       &       0.66       &       0.40       & \underline{0.97} \\
            Ours                  &  \textbf{6.88}   &  \textbf{40.75}   &  \textbf{2.35}   &  \textbf{0.81}   &  \textbf{0.58}   &  \textbf{0.94}   &                  Ours                  &  \textbf{6.86}   &  \textbf{38.95}   &  \textbf{2.52}   &  \textbf{0.80}   &  \textbf{0.49}   &       0.95       \\ \midrule
            \multicolumn{7}{c}{\textbf{Dataset: RoadScene Fusion Dataset}~\cite{xu2020aaai}}                                      &                                            \multicolumn{7}{c}{\textbf{Dataset: TNO Fusion Dataset}~\cite{TNO}}                                            \\
            &  EN $\uparrow$   &   SD $\uparrow$   &  MI $\uparrow$   &  VIF $\uparrow$  & Qabf $\uparrow$  & SSIM $\uparrow$  &                                        &  EN $\uparrow$   &   SD $\uparrow$   &  MI $\uparrow$   &  VIF $\uparrow$  & Qabf $\uparrow$  & SSIM $\uparrow$  \\ \midrule
            FGAN~\cite{ma2019fusiongan}       &       7.12       &       40.13       &       1.90       &       0.36       &       0.26       &       0.61       &      FGAN~\cite{ma2019fusiongan}       &       6.74       &       34.41       &       1.78       &       0.42       &       0.25       &       0.66       \\
            GMcC~\cite{DBLP:journals/tim/MaZSLX21} &       7.26       &       43.44       &       1.86       &       0.49       &       0.34       &       0.81       & GMcC~\cite{DBLP:journals/tim/MaZSLX21} &       6.86       &       35.51       &       1.64       &       0.53       &       0.28       &       0.83       \\
            U2F~\cite{9151265}           &       7.16       &       38.97       &       1.83       &       0.54       &       0.49       &       0.96       &           U2F~\cite{9151265}           &       7.02       &       38.52       &       1.41       &       0.63       & \underline{0.43} &       0.93       \\
            RFN~\cite{DBLP:conf/cvpr/Xu0YLL22}   &       7.30       &       43.37       &       1.64       &       0.49       &       0.43       &       0.88       &   RFN~\cite{DBLP:conf/cvpr/Xu0YLL22}   &       6.93       &       34.95       &       1.21       &       0.55       &       0.37       &       0.87       \\
            TarD~\cite{DBLP:conf/cvpr/LiuFHWLZL22} & \underline{7.31} & \underline{47.24} & \underline{2.15} &       0.53       &       0.41       &       0.86       & TarD~\cite{DBLP:conf/cvpr/LiuFHWLZL22} &       7.02       & \underline{49.89} &       1.89       &       0.54       &       0.28       &       0.83       \\
            DeF~\cite{Liang2022ECCV}        &       7.31       &       44.91       &       2.09       &       0.55       &       0.46       &       0.86       &        DeF~\cite{Liang2022ECCV}        & \underline{7.06} &       40.70       & \underline{2.04} &       0.64       &       0.43       &       0.92       \\
            UMF~\cite{DBLP:conf/ijcai/WangLFL22}  &       7.29       &       42.91       &       1.96       & \underline{0.61} & \underline{0.50} & \underline{0.98} &  UMF~\cite{DBLP:conf/ijcai/WangLFL22}  &       6.83       &       36.56       &       1.66       & \underline{0.65} &       0.42       & \underline{0.94} \\
            Ours                  &  \textbf{7.41}   &  \textbf{52.61}   &  \textbf{2.35}   &  \textbf{0.75}   &  \textbf{0.65}   &  \textbf{0.98}   &                  Ours                  &  \textbf{7.06}   &  \textbf{51.42}   &  \textbf{2.21}   &  \textbf{0.81}   &  \textbf{0.49}   &  \textbf{0.95}   \\ \bottomrule
    \end{tabular}}
\end{table*}%

\begin{table}[t]
    \centering
    \caption{Ablation experiment results. \textbf{Bold} indicates the best value.}
    \label{tab:ablation}%
    \resizebox{\linewidth}{!}{
        \begin{tabular}{cccccccccc}
            \toprule
            \multirow{2}{*}{}            &      \multicolumn{3}{c}{Configurations}      & \multirow{2}{*}{EN} & \multirow{2}{*}{SD} & \multirow{2}{*}{MI} & \multirow{2}{*}{VIF} & \multirow{2}{*}{Qabf} & \multirow{2}{*}{SSIM} \\
            \cmidrule(lr){2-4}           &  DDPM   & $r(\boldsymbol{x})$ &    $\phi$    &                     &                     &                      &                     &                     &                     \\ \midrule
            \uppercase\expandafter{\romannumeral1} &         &       $\surd$       & $\backslash$ &        7.19         &        41.82        &         2.11         &        0.60         &        0.42         &        0.92         \\
            \uppercase\expandafter{\romannumeral2} & $\surd$ &                     & $\backslash$ &        7.33         &        44.12        &         2.29         &        0.69         &        0.52         &        0.93         \\
            \uppercase\expandafter{\romannumeral3} & $\surd$ &       $\surd$       &     0.1      &        7.25         &        43.16        &         2.26         &        0.66         &        0.49         &        0.90         \\
            \uppercase\expandafter{\romannumeral4} & $\surd$ &       $\surd$       &      1       &        7.26         &        42.37        &         2.24         &        0.66         &        0.47         &        0.91         \\ \midrule
            Ours                  & $\surd$ &       $\surd$       & $\backslash$ &    \textbf{7.41}    &   \textbf{52.61}    &    \textbf{2.35}     &    \textbf{0.75}    &    \textbf{0.65}    &    \textbf{0.98}    \\ \bottomrule
    \end{tabular}}
\end{table}%

\subsection{Why does one-step EM work?}
The main difference between our DDFM and conventional EM algorithm lies in that the traditional method requires numerous iterations to obtain the optimal $\boldsymbol{x}$, \ie, the operation from line 6-13 in \cref{alg:moal} needs to be looped many times. However, our DDFM only requires one step of the EM algorithm iteration, which is embedded into the DDPM framework to accomplish sampling. In the following, we give \cref{proposition3} to demonstrate its rationality.
\begin{proposition}\label{proposition3}
    %    \vspace{-0.7em}
    One-step unconditional diffusion sampling combined with one-step EM iteration is equivalent to one-step conditional diffusion sampling.
\end{proposition}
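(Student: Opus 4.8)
The plan is to prove the equivalence by showing that both procedures feed \emph{the same} denoised estimate $\hat{\boldsymbol{f}}_{0|t}$ into the identical sampling update \cref{eq:sampling_update}, so that the resulting $\boldsymbol{f}_{t-1}$ coincides. Everything therefore reduces to comparing the denoised estimate produced by the conditional score against the one produced by the UDS-then-EM pipeline.

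First I would make the conditional side explicit. One-step conditional diffusion sampling replaces the unconditional score $\nabla_{\boldsymbol{f}_t}\log p_t(\boldsymbol{f}_t)$ by the conditional score $\nabla_{\boldsymbol{f}_t}\log p_t(\boldsymbol{f}_t|\boldsymbol{i,v})$ inside the denoiser relation \cref{eq:denoiser}. Substituting the Bayes decomposition of \cref{eq:conditionscore} and using \cref{eq:denoiser} for the unconditional part, the conditional denoised estimate becomes
\begin{equation}
    \hat{\boldsymbol{f}}_{0|t}^{\mathrm{cond}} = \tilde{\boldsymbol{f}}_{0|t} + \frac{1-\bar\alpha_t}{\sqrt{\bar\alpha_t}}\,\nabla_{\boldsymbol{f}_t}\log p_t(\boldsymbol{i,v}\mid\tilde{\boldsymbol{f}}_{0|t}),
\end{equation}
i.e. the unconditional estimate corrected by a step along the log-likelihood gradient. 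It thus suffices to show that the EM module produces exactly this correction.

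Second, I would analyze the EM side. The rectification $\tilde{\boldsymbol{f}}_{0|t}\Rightarrow\hat{\boldsymbol{f}}_{0|t}$ is the M-step maximizer of the $\mathcal{Q}$-function built from the log-likelihood \cref{eq:loss}, whose closed form is \cref{eq:X,eq:F}. Identifying $\log p_t(\boldsymbol{i,v}|\cdot)$ with the fusion log-likelihood established in \cref{sec:321}, I would argue that one M-step moves $\tilde{\boldsymbol{f}}_{0|t}$ along the ascent direction of that log-likelihood; with the latent-variable weights frozen at their E-step values \cref{eq:E-step}, this single EM update reproduces the likelihood-gradient correction above. Matching the two expressions for $\hat{\boldsymbol{f}}_{0|t}$ then yields the claim, since both are substituted into the same \cref{eq:sampling_update}.

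The hard part will be this second step: rigorously equating the \emph{exact} M-step maximizer \cref{eq:X} with a single \emph{gradient} step along $\nabla_{\boldsymbol{f}_t}\log p_t(\boldsymbol{i,v}\mid\tilde{\boldsymbol{f}}_{0|t})$, including pinning down the scaling so the effective step size matches $(1-\bar\alpha_t)/\sqrt{\bar\alpha_t}$. This requires (a) rewriting the M-step solution as a proximal/Gauss--Newton step on the log-likelihood with the frozen surrogate weights $\bar{m}_{ij},\bar{n}_{ij}$, and (b) absorbing the residual discrepancy into the same Laplace-type approximation already invoked for the second line of \cref{eq:conditionscore}. I would also need to track the variable substitution $\boldsymbol{x}=\boldsymbol{f}-\boldsymbol{v}$, $\boldsymbol{y}=\boldsymbol{i}-\boldsymbol{v}$ so that the correction is expressed back in $\boldsymbol{f}$-space as demanded by \cref{eq:F}, and to verify that freezing the E-step weights does not alter the leading-order direction of the update.
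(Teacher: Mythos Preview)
Your proposal is correct and follows essentially the same route as the paper: decompose the conditional denoiser via \cref{eq:conditionscore} to obtain $\tilde{\boldsymbol{f}}_{0|t}$ plus a likelihood-gradient correction, and then identify the EM rectification with that correction. The paper is in fact looser than you anticipate on the ``hard part'': rather than rewriting the closed-form M-step as a proximal or Gauss--Newton step and matching constants, it simply observes that \cref{eq:minX} could equally be solved by one gradient-descent step $\hat{\boldsymbol{x}}_0=\tilde{\boldsymbol{x}}_0-\nabla_{\tilde{\boldsymbol{x}}_0}\mathcal{L}_{\boldsymbol{x}}$, notes that $\boldsymbol{k},\boldsymbol{u}$ are functions of $\boldsymbol{i},\boldsymbol{v}$, and absorbs both the factor $(1-\bar\alpha_t)/\sqrt{\bar\alpha_t}$ and the change of differentiation variable from $\boldsymbol{f}_t$ to $\tilde{\boldsymbol{x}}_0$ into a single step-size parameter $\zeta_t$ without pinning it down.
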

\begin{proof}
    The estimation of $\hat{\boldsymbol{f}}_{0|t}$ in conditional diffusion sampling, refer to \cref{eq:denoiser}, could be expressed as:
    \begin{subequations}
    % \vspace{-0.1em}
    %     \setlength{\abovedisplayskip}{0pt}
    %     \setlength{\belowdisplayskip}{0pt}
        \small
        \begin{align}
            & \hat{f}_{0|t}\left(\boldsymbol{f}_t,\boldsymbol{i},\boldsymbol{v}\right)=\frac{1}{\sqrt{\bar{\alpha}_t}}\left[\boldsymbol{f}_t+\left(1-\bar{\alpha}_t\right) \boldsymbol{s}_\theta\left(\boldsymbol{f}_t, \boldsymbol{i},\boldsymbol{v}\right)\right]                        \label{equ:proofsub1} \\
            =       & \frac{1}{\sqrt{\bar{\alpha}_t}}\left\{\boldsymbol{f}_t\!+\!\left(1\!-\!\bar{\alpha}_t\right)\!\left[\boldsymbol{s}_\theta\!\left(\boldsymbol{f}_t\right)\!+\!\nabla_{\boldsymbol{f}_t}\!\log p_t\!\left(\boldsymbol{i,\!v}|\boldsymbol{f}_t\right)\right]\right\} \label{equ:proofsub2} \\
            \approx & \tilde{f}_{0|t}\left(\boldsymbol{f}_t\right)+\frac{1-\bar{\alpha}_t}{\sqrt{\bar{\alpha}_t}} \nabla_{\boldsymbol{f_t}}\!\log p_t\left(\boldsymbol{i,v}|\tilde{\boldsymbol{f}}_{0|t}\right)                                                                                      \label{equ:proofsub3} \\
            =       & \tilde{f}_{0|t}\left(\boldsymbol{f}_t\right)-\zeta_t \nabla_{\tilde{\boldsymbol{x}}_{0}}\mathcal{L}_{\boldsymbol{x}}\left(\boldsymbol{i},\boldsymbol{v},\tilde{\boldsymbol{x}}_{0}\right). \label{equ:proofsub4}
        \end{align}
    \end{subequations}
    \cref{equ:proofsub1,equ:proofsub2,equ:proofsub3} are respectively based on the definition of Score-based DDPM, Bayes' theorem, and proof in \cite{DBLP:journals/corr/abs-2209-14687}.
    For \cref{equ:proofsub4}, although optimization \cref{eq:minX} has a closed-form solution (\cref{eq:X}), it can also be solved by gradient descent:
    \begin{equation}
        \resizebox{.88\hsize}{!}{$
            \hat{\boldsymbol{x}}_0 = \tilde{\boldsymbol{x}}_{0} + \nabla_{\tilde{\boldsymbol{x}}_{0}}\mathcal{L}_{\boldsymbol{x}}\left(\boldsymbol{k},\boldsymbol{u},\tilde{\boldsymbol{x}}_{0}\right)
            =\tilde{\boldsymbol{x}}_{0} + \nabla_{\tilde{\boldsymbol{x}}_{0}}\mathcal{L}_{\boldsymbol{x}}\left(\boldsymbol{i},\boldsymbol{v},\tilde{\boldsymbol{x}}_{0}\right)$}
    \end{equation}
    where the second equation holds true because as the input for updating $\hat{\boldsymbol{x}}_0$ (\cref{eq:X}), $\boldsymbol{k}$ and $\boldsymbol{u}$ are functions of $\boldsymbol{i}$ and $\boldsymbol{v}$. $\zeta_t$ in \cref{equ:proofsub4} can be regraded as the update step size.

    Hence, conditional sampling $\hat{f}_{0|t}\!\left(\boldsymbol{f}_t,\boldsymbol{i},\boldsymbol{v}\right)$ can be split as an unconditional diffusion sampling $\tilde{f}_{0|t}\!\left(\boldsymbol{f}_t\right)$ and one-step EM iteration  $\nabla_{\tilde{\boldsymbol{x}}_{0}}\mathcal{L}_{\boldsymbol{x}}\!\left(\boldsymbol{i},\boldsymbol{v},\tilde{\boldsymbol{x}}_{0}\right)$, corresponding to UDS module (part 1) and EM module, respectively.
\end{proof}
\begin{remark}\label{remark3}
\cref{proposition3} demonstrates the theoretical explanation for the rationality of inserting the EM module into the UDS module and explains why the EM module only involves one iteration of the Bayesian inference algorithm.
\end{remark}

\section{Infrared and visible image fusion}\label{sec:experiment}
In this section, we elaborate on numerous experiments for IVF task to demonstrate the superiority of our method. More
related experiments are placed in \textit{supplementary material}.
\subsection{Setup}
\bfsection{Datasets and pre-trained model}
Following the protocol in \cite{DBLP:conf/cvpr/LiuFHWLZL22,Liang2022ECCV}, IVF experiments are conducted on the four test datasets, \ie, TNO~\cite{TNO}, RoadScene~\cite{xu2020aaai}, MSRS~\cite{DBLP:journals/inffus/TangYZJM22}, and M$^3$FD~\cite{DBLP:conf/cvpr/LiuFHWLZL22}. Note that there is no training dataset due to that we do not need any fine-tuning for specific tasks but directly use the pre-trained DDPM model. We choose the pre-trained model proposed by \cite{dhariwal2021diffusion}, which is trained on ImageNet~\cite{DBLP:journals/ijcv/RussakovskyDSKS15}.

\bfsection{Metrics}
We employ six metrics including entropy (EN), standard deviation (SD), mutual information (MI), visual information fidelity (VIF), $Q^{AB/F}$, and structural similarity index measure (SSIM) in the quantitative experiments to comprehensively evaluate the fused effect. The detail of metrics is in \cite{ma2019infrared}.

\bfsection{Implement details}
We use a machine with one NVIDIA GeForce RTX 3090 GPU for fusion image generation. All input images are normalized to $[-1, 1]$. $\psi$ and $\eta$ in \cref{eq:unconstraint} are set to 0.5 and 0.1, respectively.
Please refer to the \textit{supplementary material} for selecting $\psi$ and $\eta$ via grid search.
%The choice of hyperparameters in the experiment are validated in the next section.
\subsection{Comparison with SOTA methods}
In this section, we compare our DDFM with the state-of-the-art methods, including
the \textit{GAN-based generative methods group}:
FusionGAN~\cite{ma2019fusiongan},
GANMcC~\cite{DBLP:journals/tim/MaZSLX21},
TarDAL~\cite{DBLP:conf/cvpr/LiuFHWLZL22}, and
UMFusion~\cite{DBLP:conf/ijcai/WangLFL22};
and the \textit{discriminative methods group}:
U2Fusion~\cite{9151265},
RFNet~\cite{DBLP:conf/cvpr/Xu0YLL22}, and
DeFusion~\cite{Liang2022ECCV}.

\bfsection{Qualitative comparison}
We show the comparison of fusion results in \cref{fig:IVF2,fig:IVF3}. Our approach effectively combines thermal radiation information from infrared images with detailed texture information from visible images. As a result, objects located in dimly-lit environments are conspicuously accentuated, enabling easy distinguishing of foreground objects from the background. Moreover, previously indistinct background features due to low illumination now possess clearly defined edges and abundant contour information, enhancing our ability to comprehend the scene.

\bfsection{Quantitative comparison}
Subsequently, six metrics previously mentioned are utilized to quantitatively compare the fusion outcomes, as presented in \cref{tab:Quantitative}. Our method demonstrates remarkable performance across almost all metrics, affirming its suitability for different lighting and object categories.
Notably, the outstanding values for MI, VIF and Qabf across all datasets signify its ability to generate images that adhere to human visual perception while preserving the integrity of the source image information.

\begin{figure*}[t]
    \centering
    \includegraphics[width=0.9\linewidth]{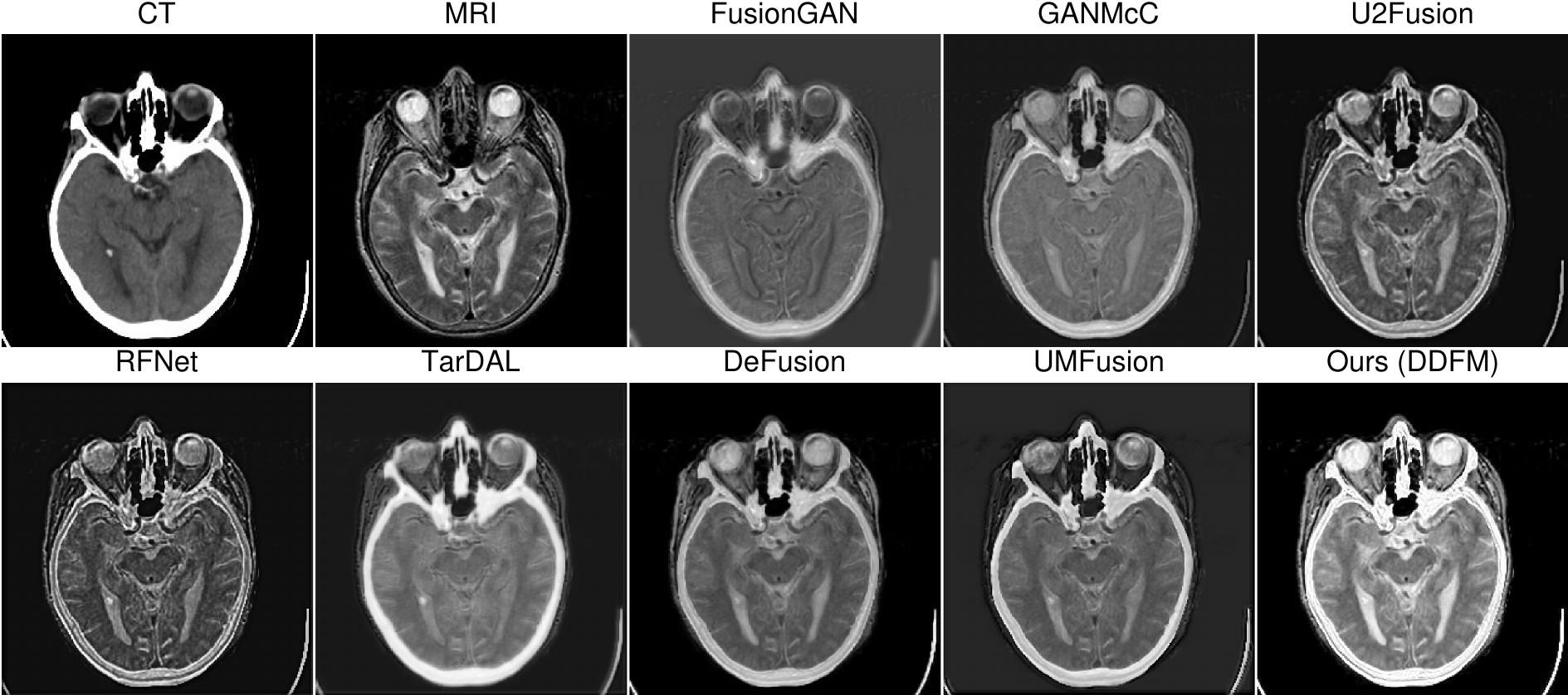}
    \caption{Visual comparison for MIF task.}
    \label{fig:MIF}
\end{figure*}

\subsection{Ablation studies}\label{sec:ablation}
Numerous ablation experiments are conducted to confirm the soundness of our various modules. The above six metrics are utilized to assess the fusion performance for the experimental groups, and results on the Roadscene testset are displayed in \cref{tab:ablation}.

\bfsection{Unconditional diffusion sampling module}
We first verify the effectiveness of DDPM. In Exp.~\uppercase\expandafter{\romannumeral1}, we eliminate the denoising diffusion generative framework, thus only the EM algorithm is employed to solve the optimization \cref{eq:lossfn0} and obtain the fusion image. In fairness, we keep the total iteration number consistent with DDFM.

\bfsection{EM module}
Next, we verify the components in the EM module. In Exp.~\uppercase\expandafter{\romannumeral2}, we removed the total variation penalty item $r(\boldsymbol{x})$ in \cref{eq:loss}. Then, we remove the Bayesian inference model. As mentioned earlier, $\phi$ in \cref{eq:lossfn0} can be automatically inferred in the hierarchical Bayesian model. Therefore, we manually set $\phi$ to 0.1 (Exp.~\uppercase\expandafter{\romannumeral3}) and 1 (Exp.~\uppercase\expandafter{\romannumeral4}), and used the ADMM algorithm to infer the model.

In conclusion, the results presented in Tab.~\ref{tab:ablation} demonstrate that none of the experimental groups is able to achieve fusion results comparable to our DDFM, further emphasizing the effectiveness and rationality of our approach.

\begin{table}[t]
    \centering
    \caption{The quantitative results of the MIF task, with the best and second-best values in \textbf{boldface} and \underline{underline}, respectively.}
    \label{tab:MIF}%
    \resizebox{\linewidth}{!}{
        \begin{tabular}{ccccccc}
            \toprule
            \multicolumn{7}{c}{\textbf{Dataset: Harvard Medical Fusion Dataset}~\cite{HarvardMIF}}                                   \\
            &  EN $\uparrow$   &   SD $\uparrow$   &  MI $\uparrow$   &  VIF $\uparrow$  & Qabf $\uparrow$  & SSIM $\uparrow$  \\ \midrule
            FGAN~\cite{ma2019fusiongan}       &       4.05       &       29.20       &       1.53       &       0.39       &       0.18       &       0.23       \\
            GMcC~\cite{DBLP:journals/tim/MaZSLX21} &       4.18       &       42.49       &       1.74       &       0.50       &       0.42       &       0.35       \\
            U2F~\cite{9151265}           &       4.14       &       48.89       &       1.80       &       0.50       &       0.55       &       1.14       \\
            RFN~\cite{DBLP:conf/cvpr/Xu0YLL22}   &  \textbf{4.75}   &       40.81       &       1.62       &       0.43       &       0.56       &       0.40       \\
            TarD~\cite{DBLP:conf/cvpr/LiuFHWLZL22} &       4.61       &       60.64       &       1.44       &       0.33       &       0.21       &       0.25       \\
            DeF~\cite{Liang2022ECCV}        &       4.21       & \underline{61.65} & \underline{1.85} & \underline{0.62} & \underline{0.59} & \underline{1.40} \\
            UMF~\cite{DBLP:conf/ijcai/WangLFL22}  &       4.61       &       27.28       &       1.62       &       0.40       &       0.27       &       0.30       \\
            Ours                  & \underline{4.64} &  \textbf{63.11}   &  \textbf{1.99}   &  \textbf{0.76}   &  \textbf{0.60}   &  \textbf{1.41}   \\ \bottomrule
        \end{tabular}
    }
\end{table}%

\section{Medical image fusion}\label{sec:experiment2}
In this section, MIF experiments are carried out to verify the effectiveness of our method.

%\subsection{Setup}
\bfsection{Setup}
We choose 50 pairs of medical images from the Harvard Medical Image Dataset~\cite{HarvardMIF} for the MIF experiments, including image pairs of MRI-CT, MRI-PET and MRI-SPECT. The generation strategy and evaluation metrics for the MIF task are identical to those used for IVF.

\bfsection{Comparison with SOTA methods}
Qualitative and quantitative results are shown in \cref{fig:MIF,tab:MIF}. It is evident that DDFM retains intricate textures while emphasizing structural information, leading to remarkable performance across both visual and almost all numerical metrics.

\section{Conclusion}
We propose DDFM, a novel generative image fusion algorithm based on the denoising diffusion probabilistic model (DDPM).
The generation problem is split into an unconditional DDPM to leverage image generative priors and a maximum likelihood sub-problem to preserve cross-modality information from source images.
We model the latter using a hierarchical Bayesian approach and its solution based on EM algorithm can be integrated into unconditional DDPM to accomplish conditional image fusion.
Experiments on infrared-visible and medical image fusion demonstrate that DDFM achieves promising fusion results.

\section*{Acknowledgement}\label{sec:6}
This work has been supported by the National Key Research and Development Program of China under grant 2018AAA0102201, the National Natural Science Foundation of China under Grant 61976174 and 12201497, the Macao Science and Technology Development Fund under Grant 061/2020/A2, Shaanxi Fundamental Science Research Project for Mathematics and Physics under Grant 22JSQ033, the Fundamental Research Funds for the Central Universities under Grant D5000220060, and partly supported by the Alexander von Humboldt Foundation.

{\small
    \bibliographystyle{ieee_fullname}
    \bibliography{xbib}

\begin{thebibliography}{10}\itemsep=-1pt

\bibitem{anderson1982reverse}
Brian~DO Anderson.
\newblock Reverse-time diffusion equation models.
\newblock {\em Stochastic Processes and their Applications}, 12(3):313--326,
  1982.

\bibitem{DBLP:journals/corr/abs-2004-10934}
Alexey Bochkovskiy, Chien{-}Yao Wang, and Hong{-}Yuan~Mark Liao.
\newblock Yolov4: Optimal speed and accuracy of object detection.
\newblock {\em CoRR}, abs/2004.10934, 2020.

\bibitem{DBLP:journals/corr/abs-2209-14687}
Hyungjin Chung, Jeongsol Kim, Michael~T. McCann, Marc~Louis Klasky, and
  Jong~Chul Ye.
\newblock Diffusion posterior sampling for general noisy inverse problems.
\newblock In {\em {ICLR}}, 2023.

\bibitem{DBLP:journals/pami/0002D21}
Xin Deng and Pier~Luigi Dragotti.
\newblock Deep convolutional neural network for multi-modal image restoration
  and fusion.
\newblock {\em {IEEE} Trans. Pattern Anal. Mach. Intell.}, 43(10):3333--3348,
  2021.

\bibitem{dhariwal2021diffusion}
Prafulla Dhariwal and Alexander Nichol.
\newblock Diffusion models beat gans on image synthesis.
\newblock {\em Advances in Neural Information Processing Systems},
  34:8780--8794, 2021.

\bibitem{DBLP:journals/tip/GaoDXXD22}
Fangyuan Gao, Xin Deng, Mai Xu, Jingyi Xu, and Pier~Luigi Dragotti.
\newblock Multi-modal convolutional dictionary learning.
\newblock {\em {IEEE} Trans. Image Process.}, 31:1325--1339, 2022.

\bibitem{DBLP:conf/nips/GoodfellowPMXWOCB14}
Ian~J. Goodfellow, Jean Pouget{-}Abadie, Mehdi Mirza, Bing Xu, David
  Warde{-}Farley, Sherjil Ozair, Aaron~C. Courville, and Yoshua Bengio.
\newblock Generative adversarial nets.
\newblock In {\em {NeurIPS}}, pages 2672--2680, 2014.

\bibitem{HarvardMIF}
{Harvard Medical website}.
\newblock \url{http://www.med.harvard.edu/AANLIB/home.html}.

\bibitem{he2023HQG}
Chunming He, Kai Li, Guoxia Xu, Jiangpeng Yan, Longxiang Tang, Yulun Zhang, Xiu
  Li, and Yaowei Wang.
\newblock Hqg-net: Unpaired medical image enhancement with high-quality
  guidance.
\newblock {\em arXiv preprint arXiv:2307.07829}, 2023.

\bibitem{He2023Camouflaged}
Chunming He, Kai Li, Yachao Zhang, Longxiang Tang, Yulun Zhang, Zhenhua Guo,
  and Xiu Li.
\newblock Camouflaged object detection with feature decomposition and edge
  reconstruction.
\newblock In {\em {CVPR}}, 2023.

\bibitem{he2023weaklysupervised}
Chunming He, Kai Li, Yachao Zhang, Guoxia Xu, Longxiang Tang, Yulun Zhang,
  Zhenhua Guo, and Xiu Li.
\newblock Weakly-supervised concealed object segmentation with sam-based pseudo
  labeling and multi-scale feature grouping.
\newblock {\em arXiv preprint arXiv:2305.11003}, 2023.

\bibitem{he2023strategic}
Chunming He, Kai Li, Yachao Zhang, Yulun Zhang, Zhenhua Guo, Xiu Li, Martin
  Danelljan, and Fisher Yu.
\newblock Strategic preys make acute predators: Enhancing camouflaged object
  detectors by generating camouflaged objects.
\newblock {\em arXiv preprint arXiv:2308.03166}, 2023.

\bibitem{DBLP:conf/nips/HoJA20}
Jonathan Ho, Ajay Jain, and Pieter Abbeel.
\newblock Denoising diffusion probabilistic models.
\newblock In {\em NeurIPS}, 2020.

\bibitem{huangreconet}
Zhanbo Huang, Jinyuan Liu, Xin Fan, Risheng Liu, Wei Zhong, and Zhongxuan Luo.
\newblock Reconet: Recurrent correction network for fast and efficient
  multi-modality image fusion.
\newblock In {\em {ECCV}}, 2022.

\bibitem{hyvarinen2005estimation}
Aapo Hyv{\"a}rinen and Peter Dayan.
\newblock Estimation of non-normalized statistical models by score matching.
\newblock {\em Journal of Machine Learning Research}, 6(4), 2005.

\bibitem{DBLP:journals/inffus/JamesD14}
Alex~Pappachen James and Belur~V. Dasarathy.
\newblock Medical image fusion: {A} survey of the state of the art.
\newblock {\em Inf. Fusion}, 19:4--19, 2014.

\bibitem{jiang2022towards}
Zhiying Jiang, Zengxi Zhang, Xin Fan, and Risheng Liu.
\newblock Towards all weather and unobstructed multi-spectral image stitching:
  Algorithm and benchmark.
\newblock In {\em {ACM MM}}, pages 3783--3791, 2022.

\bibitem{DBLP:journals/tip/JungKJHS20}
Hyungjoo Jung, Youngjung Kim, Hyunsung Jang, Namkoo Ha, and Kwanghoon Sohn.
\newblock Unsupervised deep image fusion with structure tensor representations.
\newblock {\em {IEEE} Trans. Image Process.}, 29:3845--3858, 2020.

\bibitem{kawar2022denoising}
Bahjat Kawar, Michael Elad, Stefano Ermon, and Jiaming Song.
\newblock Denoising diffusion restoration models.
\newblock {\em arXiv preprint arXiv:2201.11793}, 2022.

\bibitem{DBLP:journals/tip/LiMYZ20}
Hui Li, Kede Ma, Hongwei Yong, and Lei Zhang.
\newblock Fast multi-scale structural patch decomposition for multi-exposure
  image fusion.
\newblock {\em {IEEE} Trans. Image Process.}, 29:5805--5816, 2020.

\bibitem{DBLP:journals/tim/LiWD20}
Hui Li, Xiao{-}Jun Wu, and Tariq~S. Durrani.
\newblock Nestfuse: An infrared and visible image fusion architecture based on
  nest connection and spatial/channel attention models.
\newblock {\em {IEEE} Trans. Instrum. Meas.}, 69(12):9645--9656, 2020.

\bibitem{DBLP:journals/inffus/LiWK21}
Hui Li, Xiao{-}Jun Wu, and Josef Kittler.
\newblock Rfn-nest: An end-to-end residual fusion network for infrared and
  visible images.
\newblock {\em Inf. Fusion}, 73:72--86, 2021.

\bibitem{li2018densefuse}
Hui Li and Xiao-Jun Wu.
\newblock Densefuse: A fusion approach to infrared and visible images.
\newblock {\em {IEEE} Trans. Image Process.}, 28(5):2614--2623, 2018.

\bibitem{li2023lrrnet}
Hui Li, Tianyang Xu, Xiao-Jun Wu, Jiwen Lu, and Josef Kittler.
\newblock Lrrnet: A novel representation learning guided fusion network for
  infrared and visible images.
\newblock {\em IEEE transactions on pattern analysis and machine intelligence},
  2023.

\bibitem{Liang2022ECCV}
Pengwei Liang, Junjun Jiang, Xianming Liu, and Jiayi Ma.
\newblock Fusion from decomposition: A self-supervised decomposition approach
  for image fusion.
\newblock In {\em {ECCV}}, 2022.

\bibitem{DBLP:conf/cvpr/LiuFHWLZL22}
Jinyuan Liu, Xin Fan, Zhanbo Huang, Guanyao Wu, Risheng Liu, Wei Zhong, and
  Zhongxuan Luo.
\newblock Target-aware dual adversarial learning and a multi-scenario
  multi-modality benchmark to fuse infrared and visible for object detection.
\newblock In {\em {CVPR}}, pages 5792--5801. {IEEE}, 2022.

\bibitem{liu2020bilevel}
Risheng Liu, Jinyuan Liu, Zhiying Jiang, Xin Fan, and Zhongxuan Luo.
\newblock A bilevel integrated model with data-driven layer ensemble for
  multi-modality image fusion.
\newblock {\em {IEEE} Trans. Image Process.}, 30:1261--1274, 2020.

\bibitem{DBLP:conf/mm/LiuLL021}
Risheng Liu, Zhu Liu, Jinyuan Liu, and Xin Fan.
\newblock Searching a hierarchically aggregated fusion architecture for fast
  multi-modality image fusion.
\newblock In {\em {ACM MM}}, pages 1600--1608. {ACM}, 2021.

\bibitem{liu2023bi}
Zhu Liu, Jinyuan Liu, Guanyao Wu, Long Ma, Xin Fan, and Risheng Liu.
\newblock Bi-level dynamic learning for jointly multi-modality image fusion and
  beyond.
\newblock {\em arXiv preprint arXiv:2305.06720}, 2023.

\bibitem{ma2016infrared}
Jiayi Ma, Chen Chen, Chang Li, and Jun Huang.
\newblock Infrared and visible image fusion via gradient transfer and total
  variation minimization.
\newblock {\em Inf. Fusion}, 31:100--109, 2016.

\bibitem{ma2020infrared}
Jiayi Ma, Pengwei Liang, Wei Yu, Chen Chen, Xiaojie Guo, Jia Wu, and Junjun
  Jiang.
\newblock Infrared and visible image fusion via detail preserving adversarial
  learning.
\newblock {\em Inf. Fusion}, 54:85--98, 2020.

\bibitem{ma2019infrared}
Jiayi Ma, Yong Ma, and Chang Li.
\newblock Infrared and visible image fusion methods and applications: A survey.
\newblock {\em Inf. Fusion}, 45:153--178, 2019.

\bibitem{DBLP:journals/tip/MaXJMZ20}
Jiayi Ma, Han Xu, Junjun Jiang, Xiaoguang Mei, and Xiao{-}Ping~(Steven) Zhang.
\newblock Ddcgan: {A} dual-discriminator conditional generative adversarial
  network for multi-resolution image fusion.
\newblock {\em {IEEE} Trans. Image Process.}, 29:4980--4995, 2020.

\bibitem{ma2019fusiongan}
Jiayi Ma, Wei Yu, Pengwei Liang, Chang Li, and Junjun Jiang.
\newblock Fusiongan: A generative adversarial network for infrared and visible
  image fusion.
\newblock {\em Inf. Fusion}, 48:11--26, 2019.

\bibitem{DBLP:journals/tim/MaZSLX21}
Jiayi Ma, Hao Zhang, Zhenfeng Shao, Pengwei Liang, and Han Xu.
\newblock Ganmcc: {A} generative adversarial network with multiclassification
  constraints for infrared and visible image fusion.
\newblock {\em {IEEE} Trans. Instrum. Meas.}, 70:1--14, 2021.

\bibitem{mao2017least}
Xudong Mao, Qing Li, Haoran Xie, Raymond~YK Lau, Zhen Wang, and Stephen
  Paul~Smolley.
\newblock Least squares generative adversarial networks.
\newblock In {\em {ICCV}}, pages 2794--2802, 2017.

\bibitem{meher2019a}
Bikash Meher, Sanjay Agrawal, Rutuparna Panda, and Ajith Abraham.
\newblock A survey on region based image fusion methods.
\newblock {\em Inf. Fusion}, 48:119--132, 2019.

\bibitem{mirza2014conditional}
Mehdi Mirza and Simon Osindero.
\newblock Conditional generative adversarial nets.
\newblock {\em arXiv preprint arXiv:1411.1784}, 2014.

\bibitem{DBLP:conf/icml/NicholD21}
Alexander~Quinn Nichol and Prafulla Dhariwal.
\newblock Improved denoising diffusion probabilistic models.
\newblock In {\em {ICML}}, pages 8162--8171, 2021.

\bibitem{qin2023diverse}
Haotong Qin, Yifu Ding, Xiangguo Zhang, Jiakai Wang, Xianglong Liu, and Jiwen
  Lu.
\newblock Diverse sample generation: Pushing the limit of generative data-free
  quantization.
\newblock {\em IEEE Transactions on Pattern Analysis and Machine Intelligence},
  2023.

\bibitem{qin2023bibench}
Haotong Qin, Mingyuan Zhang, Yifu Ding, Aoyu Li, Ziwei Liu, Fisher Yu, and
  Xianglong Liu.
\newblock Bibench: Benchmarking and analyzing network binarization.
\newblock In {\em {ICML}}, 2023.

\bibitem{qin2022distribution}
Haotong Qin, Xiangguo Zhang, Ruihao Gong, Yifu Ding, Yi Xu, and Xianglong Liu.
\newblock Distribution-sensitive information retention for accurate binary
  neural network.
\newblock {\em International Journal of Computer Vision}, 2022.

\bibitem{DBLP:conf/cvpr/QinZHGDJ19}
Xuebin Qin, Zichen~Vincent Zhang, Chenyang Huang, Chao Gao, Masood Dehghan, and
  Martin J{\"{a}}gersand.
\newblock Basnet: Boundary-aware salient object detection.
\newblock In {\em {CVPR}}, pages 7479--7489. {CVF} / {IEEE}, 2019.

\bibitem{rombach2022high}
Robin Rombach, Andreas Blattmann, Dominik Lorenz, Patrick Esser, and Bj{\"o}rn
  Ommer.
\newblock High-resolution image synthesis with latent diffusion models.
\newblock In {\em {CVPR}}, pages 10684--10695, 2022.

\bibitem{DBLP:journals/ijcv/RussakovskyDSKS15}
Olga Russakovsky, Jia Deng, Hao Su, Jonathan Krause, Sanjeev Satheesh, Sean Ma,
  Zhiheng Huang, Andrej Karpathy, Aditya Khosla, Michael~S. Bernstein,
  Alexander~C. Berg, and Li Fei{-}Fei.
\newblock Imagenet large scale visual recognition challenge.
\newblock {\em Int. J. Comput. Vis.}, 115(3):211--252, 2015.

\bibitem{DBLP:conf/iclr/SongME21}
Jiaming Song, Chenlin Meng, and Stefano Ermon.
\newblock Denoising diffusion implicit models.
\newblock In {\em {ICLR}}, 2021.

\bibitem{song2023pseudoinverse}
Jiaming Song, Arash Vahdat, Morteza Mardani, and Jan Kautz.
\newblock Pseudoinverse-guided diffusion models for inverse problems.
\newblock In {\em {ICLR}}, 2023.

\bibitem{song2019generative}
Yang Song and Stefano Ermon.
\newblock Generative modeling by estimating gradients of the data distribution.
\newblock {\em Advances in Neural Information Processing Systems}, 32, 2019.

\bibitem{DBLP:conf/iclr/0011SKKEP21}
Yang Song, Jascha Sohl{-}Dickstein, Diederik~P. Kingma, Abhishek Kumar, Stefano
  Ermon, and Ben Poole.
\newblock Score-based generative modeling through stochastic differential
  equations.
\newblock In {\em {ICLR}}, 2021.

\bibitem{DBLP:journals/inffus/TangYM22}
Linfeng Tang, Jiteng Yuan, and Jiayi Ma.
\newblock Image fusion in the loop of high-level vision tasks: {A}
  semantic-aware real-time infrared and visible image fusion network.
\newblock {\em Inf. Fusion}, 82:28--42, 2022.

\bibitem{DBLP:journals/inffus/TangYZJM22}
Linfeng Tang, Jiteng Yuan, Hao Zhang, Xingyu Jiang, and Jiayi Ma.
\newblock Piafusion: {A} progressive infrared and visible image fusion network
  based on illumination aware.
\newblock {\em Inf. Fusion}, 83-84:79--92, 2022.

\bibitem{TNO}
Alexander Toet and Maarten~A. Hogervorst.
\newblock {Progress in color night vision}.
\newblock {\em Optical Engineering}, 51(1):1 -- 20, 2012.

\bibitem{DBLP:journals/corr/abs-2107-09011}
Vibashan VS, Jeya Maria~Jose Valanarasu, Poojan Oza, and Vishal~M. Patel.
\newblock Image fusion transformer.
\newblock {\em CoRR}, abs/2107.09011, 2021.

\bibitem{DBLP:conf/ijcai/WangLFL22}
Di Wang, Jinyuan Liu, Xin Fan, and Risheng Liu.
\newblock Unsupervised misaligned infrared and visible image fusion via
  cross-modality image generation and registration.
\newblock In {\em {IJCAI}}, pages 3508--3515. ijcai.org, 2022.

\bibitem{wang2021dual}
Jiakai Wang, Aishan Liu, Zixin Yin, Shunchang Liu, Shiyu Tang, and Xianglong
  Liu.
\newblock Dual attention suppression attack: Generate adversarial camouflage in
  physical world.
\newblock In {\em {CVPR}}, pages 8565--8574, 2021.

\bibitem{wang2022defensive}
Jiakai Wang, Zixin Yin, Pengfei Hu, Aishan Liu, Renshuai Tao, Haotong Qin,
  Xianglong Liu, and Dacheng Tao.
\newblock Defensive patches for robust recognition in the physical world.
\newblock In {\em {CVPR}}, pages 2456--2465, 2022.

\bibitem{DBLP:conf/iclr/XiaoKV22}
Zhisheng Xiao, Karsten Kreis, and Arash Vahdat.
\newblock Tackling the generative learning trilemma with denoising diffusion
  gans.
\newblock In {\em {ICLR}}, 2022.

\bibitem{9151265}
Han Xu, Jiayi Ma, Junjun Jiang, Xiaojie Guo, and Haibin Ling.
\newblock U2fusion: {A} unified unsupervised image fusion network.
\newblock {\em {IEEE} Trans. Pattern Anal. Mach. Intell.}, 44(1):502--518,
  2022.

\bibitem{xu2020aaai}
Han Xu, Jiayi Ma, Zhuliang Le, Junjun Jiang, and Xiaojie Guo.
\newblock Fusiondn: A unified densely connected network for image fusion.
\newblock In {\em {AAAI} Conference on Artificial Intelligence, {AAAI}}, pages
  12484--12491, 2020.

\bibitem{DBLP:conf/cvpr/Xu0YLL22}
Han Xu, Jiayi Ma, Jiteng Yuan, Zhuliang Le, and Wei Liu.
\newblock Rfnet: Unsupervised network for mutually reinforcing multi-modal
  image registration and fusion.
\newblock In {\em {CVPR}}, pages 19647--19656. {IEEE}, 2022.

\bibitem{xu2023murf}
Han Xu, Jiteng Yuan, and Jiayi Ma.
\newblock Murf: Mutually reinforcing multi-modal image registration and fusion.
\newblock {\em IEEE Transactions on Pattern Analysis and Machine Intelligence},
  2023.

\bibitem{DBLP:conf/cvpr/Xu0ZSL021}
Shuang Xu, Jiangshe Zhang, Zixiang Zhao, Kai Sun, Junmin Liu, and Chunxia
  Zhang.
\newblock Deep gradient projection networks for pan-sharpening.
\newblock In {\em {CVPR}}, pages 1366--1375. {CVF} / {IEEE}, 2021.

\bibitem{DBLP:journals/corr/abs-2005-08448}
Shuang Xu, Zixiang Zhao, Yicheng Wang, Chunxia Zhang, Junmin Liu, and Jiangshe
  Zhang.
\newblock Deep convolutional sparse coding networks for image fusion.
\newblock {\em CoRR}, abs/2005.08448, 2020.

\bibitem{wang2022ddnm}
Wang Yinhuai, Yu Jiwen, and Zhang Jian.
\newblock Zero shot image restoration using denoising diffusion null-space
  model.
\newblock {\em arXiv:2212.00490}, 2022.

\bibitem{DBLP:journals/ijcv/ZhangM21}
Hao Zhang and Jiayi Ma.
\newblock Sdnet: {A} versatile squeeze-and-decomposition network for real-time
  image fusion.
\newblock {\em Int. J. Comput. Vis.}, 129(10):2761--2785, 2021.

\bibitem{DBLP:conf/aaai/ZhangXXGM20}
Hao Zhang, Han Xu, Yang Xiao, Xiaojie Guo, and Jiayi Ma.
\newblock Rethinking the image fusion: {A} fast unified image fusion network
  based on proportional maintenance of gradient and intensity.
\newblock In {\em {AAAI}}, pages 12797--12804. {AAAI} Press, 2020.

\bibitem{zhang2021deep}
Xingchen Zhang.
\newblock Deep learning-based multi-focus image fusion: A survey and a
  comparative study.
\newblock {\em IEEE Transactions on Pattern Analysis and Machine Intelligence},
  2021.

\bibitem{DBLP:journals/inffus/ZhangLSYZZ20}
Yu Zhang, Yu Liu, Peng Sun, Han Yan, Xiaolin Zhao, and Li Zhang.
\newblock {IFCNN:} {A} general image fusion framework based on convolutional
  neural network.
\newblock {\em Inf. Fusion}, 54:99--118, 2020.

\bibitem{DBLP:journals/corr/abs-2211-14461}
Zixiang Zhao, Haowen Bai, Jiangshe Zhang, Yulun Zhang, Shuang Xu, Zudi Lin,
  Radu Timofte, and Luc Van~Gool.
\newblock Cddfuse: Correlation-driven dual-branch feature decomposition for
  multi-modality image fusion.
\newblock In {\em {CVPR}}, pages 5906--5916, June 2023.

\bibitem{zhao2023equivariant}
Zixiang Zhao, Haowen Bai, Jiangshe Zhang, Yulun Zhang, Kai Zhang, Shuang Xu,
  Dongdong Chen, Radu Timofte, and Luc Van~Gool.
\newblock Equivariant multi-modality image fusion.
\newblock {\em arXiv preprint arXiv:2305.11443}, 2023.

\bibitem{ZHAO2020107734}
Zixiang Zhao, Shuang Xu, Chunxia Zhang, Junmin Liu, and Jiangshe Zhang.
\newblock Bayesian fusion for infrared and visible images.
\newblock {\em Signal Processing}, 177, 2020.

\bibitem{zhaoijcai2020}
Zixiang Zhao, Shuang Xu, Chunxia Zhang, Junmin Liu, Jiangshe Zhang, and Pengfei
  Li.
\newblock {DIDFuse}: Deep image decomposition for infrared and visible image
  fusion.
\newblock In {\em {IJCAI}}, pages 970--976, 2020.

\bibitem{DBLP:journals/tcsv/ZhaoXZLZL22}
Zixiang Zhao, Shuang Xu, Jiangshe Zhang, Chengyang Liang, Chunxia Zhang, and
  Junmin Liu.
\newblock Efficient and model-based infrared and visible image fusion via
  algorithm unrolling.
\newblock {\em {IEEE} Trans. Circuits Syst. Video Technol.}, 32(3):1186--1196,
  2022.

\bibitem{zhao2023spherical}
Zixiang Zhao, Jiangshe Zhang, Xiang Gu, Chengli Tan, Shuang Xu, Yulun Zhang,
  Radu Timofte, and Luc Van~Gool.
\newblock Spherical space feature decomposition for guided depth map
  super-resolution.
\newblock In {\em {ICCV}}, 2023.

\bibitem{DBLP:journals/corr/abs-2104-06977}
Zixiang Zhao, Jiangshe Zhang, Shuang Xu, Zudi Lin, and Hanspeter Pfister.
\newblock Discrete cosine transform network for guided depth map
  super-resolution.
\newblock In {\em {CVPR}}, pages 5697--5707, June 2022.

\bibitem{DBLP:conf/icmcs/00010XSHL021}
Zixiang Zhao, Jiangshe Zhang, Shuang Xu, Kai Sun, Lu Huang, Junmin Liu, and
  Chunxia Zhang.
\newblock {FGF-GAN:} {A} lightweight generative adversarial network for
  pansharpening via fast guided filter.
\newblock In {\em {ICME}}, pages 1--6. {IEEE}, 2021.

\end{thebibliography}
}

\end{document}